\documentclass{article}

\usepackage{microtype}
\usepackage{graphicx}
\usepackage{subcaption}
\usepackage{booktabs} %

\usepackage{array}
\usepackage{adjustbox}
\usepackage{amsmath,bm}
\usepackage{mathtools}
\usepackage{amsthm}
\usepackage{thmtools}
\usepackage{xcolor}
\usepackage{tikz}
\usetikzlibrary{decorations.pathreplacing}
\usetikzlibrary{patterns}
\usetikzlibrary{calc}

\newcommand{\accurset}[1]{\mathcal{E}_{#1}}

\newcommand{\singleStepThreshold}{\lambda_1}
\newcommand{\multiStepThreshold}{\lambda_2}
\newcommand{\informationLoss}{\mathbb{H}}

\def\Figref#1{Figure~\ref{#1}}

\def\Algref#1{Algorithm~\ref{#1}}
\def\Secref#1{Section~\ref{#1}}

\newcommand{\R}{\mathbb{R}}

\def\1{\bm{1}}

\DeclareMathOperator*{\argmax}{arg\,max}

\newcommand{\midd}{\,\middle|\,}

\newcommand{\lb}{\left[}
\newcommand{\rb}{\right]}

\newcommand{\ALGCOMMENT}[1]{\hfill{\itshape\footnotesize #1}}

\newcommand*{\dynasaur}{Dyna-SAuR}
\newcommand*{\dynasaurFull}{Dyna-style Safety Augmented Reinforcement Learning}

\definecolor{greenSafe}{RGB}{65,117,5}
\definecolor{redPerf}{RGB}{204,7,30
}

\newcommand{\mdp}{\mathcal{M}}
\newcommand{\statesp}{\mathcal{S}}
\newcommand{\actsp}{\mathcal{A}}
\newcommand{\rewfcn}{r}
\newcommand{\dynfcn}{p}

\newcommand{\initdist}{\rho_0}
\newcommand{\discount}{\gamma}
\newcommand{\state}{s}
\newcommand{\rvstate}{S}
\newcommand{\modrvstate}{\hat{S}}
\newcommand{\act}{a}
\newcommand{\rvact}{A}
\newcommand{\rew}{r}
\newcommand{\rvrew}{R}

\newcommand{\moddyn}{\hat{p}}
\newcommand{\modparams}{\omega}
\newcommand{\modstate}{\hat{s}}
\newcommand{\rvmodstate}{\hat{S}}

\newcommand{\pol}{\pi}
\newcommand{\polparams}{\psi}
\newcommand{\polset}{\Pi}
\newcommand{\filterpolset}{\mathrm{M}}
\newcommand{\qfcn}{Q}
\newcommand{\qfcnparams}{\vartheta}

\newcommand{\filterpol}{\mu}
\newcommand{\filterpolparams}{\phi}
\newcommand{\filterqfcn}{Q^{\mathrm{SF}}}
\newcommand{\filterqfcnparams}{\theta}

\newcommand{\algiter}{j}
\newcommand{\maxalgiter}{J}

\newcommand{\unidist}{\mathbf{U}}

\newcommand{\pinknoiseproc}{\mathrm{PN}}

\newcommand{\behpol}{\beta}

\newcommand{\noise}{\epsilon}

\newcommand{\buffer}{\mathcal{D}}

\newcommand{\expscale}{\sigma}

\newcommand{\entropy}{\mathbb{H}}
\newcommand{\enttresh}{\lambda}

\newcommand{\viaact}{a^{\mathrm{V}}}

\newcommand{\polyak}{\bar{\tau}}
\newcommand{\trajfail}{\tau_\mathrm{F}}
\newcommand{\trajfailset}{\mathcal{T}_\mathrm{F}}
\newcommand{\trajret}{\tau_\mathrm{G}}
\newcommand{\trajretset}{\mathcal{T}_\mathrm{G}}

\newcommand{\na}{n_{\actsp}}
\newcommand{\ns}{n_{\statesp}}

\newcommand{\safepolset}{\Pi_{\mathrm{V}}}
\newcommand{\filter}{\kappa}
\newcommand{\failureset}{\mathcal{S}_{\mathrm{F}}}

\newcommand{\safeset}{\mathcal{S}_{\mathrm{S}}}

\newcommand{\safesetSFi}[1]{\safeset(\accurset{#1})}

\newcommand{\modelViableSeti}[1]{\viabilitykernel(\accurset{#1})}
\newcommand{\modelSafePolsSet}[1]{\safepolset(\accurset{#1})}

\newcommand{\viabilitykernel}{\mathcal{S}_{\mathrm{V}}}
\newcommand{\unviabilitykernel}{\mathcal{S}_{\mathrm{U}}}

\newcommand{\mdpsf}{\mathcal{M}^{\mathrm{SF}}}
\newcommand{\actspsf}{\mathcal{U}}
\newcommand{\dynfcnsf}{p^{\mathrm{SF}}}
\newcommand{\rewfcnsf}{r^{\mathrm{SF}}}
\newcommand{\initdistsf}{\rho^{\mathrm{SF}}_0}
\newcommand{\initdistsfeval}{\rho^{\mathrm{EVAL}}_0}

\newcommand{\asf}{u}
\newcommand{\rewsf}{r^\mathrm{SF}}

\newcommand{\discountsf}{\gamma_\mathrm{SF}}

\newcommand{\filteredAction}{a^{\mathrm{V}}}

\newcommand{\hyperplaneOffset}{b}
\newcommand{\hyperplaneVector}{w}
\newcommand{\hyperplaneTransformationFunction}{h}

\newcommand{\figmargin}{-4mm}

\newcommand{\safetyFilterModelBasedRollouts}{\mathrm{RO}^{\mathrm{SF}}}

\usepackage{multirow}
\usepackage{comment}
\usepackage[most]{tcolorbox}
\usepackage{paralist}

\definecolor{steelblue}{RGB}{70,130,180}
\definecolor{steelblue!5}{RGB}{230,238,245}  %
\definecolor{steelblue!75!black}{RGB}{37,77,108}  %
\definecolor{dynared}{RGB}{214, 39, 40}

\usepackage{hyperref}

\usepackage[preprint]{icml2026}

\usepackage{amsmath}
\usepackage{amssymb}
\usepackage{mathtools}
\usepackage{amsthm}

\usepackage[capitalize,noabbrev]{cleveref}

\theoremstyle{plain}
\newtheorem{theorem}{Theorem}[section]

\newtheorem{lemma}[theorem]{Lemma}

\theoremstyle{definition}
\newtheorem{definition}[theorem]{Definition}

\theoremstyle{remark}

\usepackage[textsize=tiny]{todonotes}

\definecolor{berndcol}{RGB}{20, 105, 176}
\definecolor{arturcol}{RGB}{34, 139, 34}

\icmltitlerunning{Safety Augmented Model-Based Reinforcement Learning}

\begin{document}

\twocolumn[
  \icmltitle{Dyna-Style Safety Augmented Reinforcement Learning: \\ Staying Safe in the Face of Uncertainty}

  \icmlsetsymbol{equal}{*}

  \begin{icmlauthorlist}
    \icmlauthor{Artur Eisele}{equal,rwth}
    \icmlauthor{Bernd Frauenknecht}{equal,rwth}
    \icmlauthor{Friedrich Solowjow}{rwth}
    \icmlauthor{Sebastian Trimpe}{rwth}
  \end{icmlauthorlist}

  \icmlaffiliation{rwth}{Institute for Data Science in Mechanical Engineering, RWTH Aachen University, Aachen, Germany}

  \icmlcorrespondingauthor{Artur Eisele}{artur.eisele@dsme.rwth-aachen.de}

  \icmlkeywords{Machine Learning, ICML}

  \vskip 0.3in
]

\printAffiliationsAndNotice{\icmlEqualContribution}

\begin{abstract}
Safety remains an open problem in reinforcement learning (RL), especially during training. While safety filters are promising to address safe exploration, they are generally poorly suited for high-dimensional systems with unknown dynamics. We propose \emph{\dynasaurFull{} (\dynasaur{})}, a novel algorithm that learns both a scalable safety filter and a control policy using a learned uncertainty-aware dynamics model, while requiring minimal domain knowledge. 
The filter avoids failures and high uncertainty regions. Thus, better models expand the set of safe and certain states, reducing filter conservatism.
We present the effectiveness of \dynasaur{} on goal-reaching CartPole as well as MuJoCo Walker, reducing failures compared to state-of-the-art methods by 2 orders of magnitude.
\end{abstract}

\section{Introduction}

Exploration safety during training is a substantial challenge in reinforcement learning (RL), making direct learning on hardware impractical for many applications. Safe RL methods typically treat safety as a soft constraint \cite{achiam2017constrained, actsafe_ICLR_2025_similarToIsol} or use safety filters from control theory \cite{bansal2017hamilton, ames2019control, wabersich2023data}. The former ensures safety only in expectation, while the latter requires extensive domain knowledge and struggles with high-dimensional problems. While results towards learning safety filters with RL were recently presented \cite{black-box-lavankul}, practical limitations such as access to a simulator and scaling of the method remain.

We introduce \emph{\dynasaurFull{}} (\dynasaur), a model-based reinforcement learning (MBRL) method that concurrently learns a control policy and a safety filter using an uncertainty-aware dynamics model. \dynasaur{} enables safe exploration, solely relying on a notion of failure and some initial environment data.

\begin{figure}[tb]
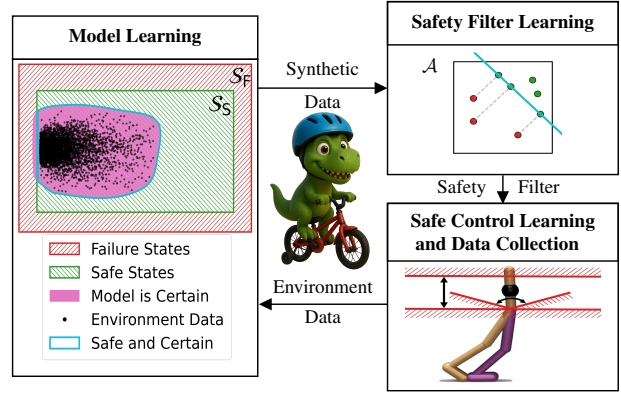

    \centering
    \include{Tex/Tikz/Introduction/figure1}
    \vspace{\figmargin}
    \caption{\dynasaur{} mechanism. \emph{ An uncertainty-aware dynamics model is used to train a safety filter that avoids both failures and uncertain regions of the model. The filter is used to safely learn a control policy in the environment. The collected data is used to improve the dynamics model, which expands the certain area and reduces conservatism in the next iteration.}
  }
    \label{fig:Figure1} 
    \vspace{-2mm}
\end{figure}

\paragraph{Main Idea}
Evaluating safety involves identifying if subsequent failures occur, which can be assessed using a learned dynamics model in synthetic rollouts without risk. However, the model is reliable only for state-action pairs in which it can accurately represent the dynamics, as illustrated in Figure \ref{fig:Figure1}. We train a safety filter to avoid both unsafe and uncertain regions. Deploying the safety filter in the environment enables safe control policy learning within the model's certain area. As training progresses, environment data improves the model, reducing uncertain regions. This gives more room for safe exploration to find better policies.

To address the technical challenges of building this architecture, we define safety within an uncertain model, present a compact RL formulation of the safety filter learning problem, and provide practical guidelines to scale up. In empirical evaluation on goal-reaching CartPole and MuJoCo Walker, \dynasaur{} matches or exceeds the performance of state-of-the-art safe RL, while reducing failures during training by at least two orders of magnitude.

\section{Background}\label{sec:background}
Introducing the concepts of MBRL and safe learning allows us to clearly specify the challenges of safety filter learning.

\subsection{Model-based Reinforcement Learning}
\label{subsec:mbrl}
We consider sequential decision-making in an environment modeled as a discrete-time Markov decision process (MDP) $\mdp = \{ \statesp, \actsp, \dynfcn, \rewfcn, \initdist, \discount \}$, with states $\rvstate_t \in \statesp \subseteq \R^{\ns}$ and actions $\rvact_t \in \actsp = [-1, 1]^{\na}$. Rewards $\rvrew_t \in \mathbb{R}$ are generated by the reward function $\rewfcn: \statesp \times \actsp \times \statesp \rightarrow \mathbb{R}$, while the environment dynamics $\dynfcn: \statesp \times \actsp \rightarrow \Delta(\statesp)$ propagate the system state, with $\Delta$ indicating a stochastic mapping. Starting from an initial state $\rvstate_0 \sim \initdist$, the goal of RL is to find the optimal policy $\pol^* : \statesp \rightarrow \actsp$ within the set of policies $\polset$ that maximizes the sum of rewards discounted by $0 \leq \discount < 1$, referred to as return, in expectation
    \begin{equation}
        \pol^* = \arg \max_{\pol \in \polset} \mathbb{E}_{\pol} \left[ \sum_{t=0}^{\infty} \discount^{t} \rvrew_{t+1} \right].
        \label{eq:rl_obj}
    \end{equation}
Where needed, we use $\rvstate_t$, $\rvact_t$, and $\rvrew_t$ to refer to random variables and distinguish them from their realizations $\state_t$, $\act_t$, and $\rew_t$. Further, the state-action occupancy of a policy $\pol$
\begin{equation}
    \rho_{\pol}(\state, \act) := \lim_{T \rightarrow \infty} \frac{1}{T} \sum_{t=0}^{T-1} \mathbb{P}_\pol \lb \rvstate_t = \state, \rvact_t = \act \rb
\end{equation}
describes the probability of being in a state-action pair for a policy $\pol$ interacting with the environment dynamics $\dynfcn$.

To reduce costly and dangerous data collection in the environment, model-based RL trains dynamics models $\moddyn(\cdot | \rvstate_t, \rvact_t)$ to approximate the environment dynamics~\cite{Deisenroth2011Jun, Hafner2023Jan}. In Dyna-style architectures~\cite{Sutton1991Jul, janner_when_2019}, $\moddyn$ is used to simulate interactions for training model-free RL algorithms. However, $\moddyn$ is typically unreliable outside the training data distribution. Therefore, models such as the probabilistic ensemble (PE)~\cite{Lakshminarayanan2017Dec} capture epistemic uncertainty, where ensemble disagreement indicates prediction inaccuracy. This uncertainty is used to define a sufficiently certain set $\accurset{} \subseteq (\statesp \times \actsp)$ where the environment is modeled accurately~\cite{macura}. We follow~\citet{Frauenknecht2025Infoprop} and use the set
\begin{equation}
\accurset{} := \{(\state_t, \act_t) \in \statesp \times \actsp \mid \entropy(\modrvstate_{t+1}) \leq \singleStepThreshold\},
\label{eq:accur_set}
\end{equation}
where the entropy of the model's predictive distribution $\entropy(\modrvstate_{t+1})$, indicating uncertainty, is upper bounded by $\singleStepThreshold $.

\subsection{Safe Learning}
\label{subsec:safe_learning}
We adopt the notion of training time safety~\cite{safeLearningInRobotics}, minimizing failures during training.
Failures are defined as entering a set of failure states $\failureset \subseteq \statesp$, such that the complement $\safeset \coloneq \statesp \setminus \failureset$ defines the set of safe states\footnote{Discussion of terminology in Appendix~\ref{subsec:safe_and_viable}}. However, some safe states inevitably lead to failure. Therefore, we aim to stay within a finite horizon\footnote{Since we consider unbounded process noise, we cannot make infinite horizon statements~\cite{UnboundedSupportMeansNoSafety}.}  viability kernel $\viabilitykernel \subseteq \safeset$ that we define inspired by probabilistic viability theory~\cite{viability_piere2025, Aubin2011_Viability}.
\begin{definition}(\textbf{Finite-Horizon Viability}) 
A state $\state$ is viable if it is in the finite horizon viability kernel
\begin{equation}
\begin{aligned}
\viabilitykernel  &\coloneq \big\{ \state \in \statesp \mid \exists \pol \in \polset: \\
&\mathbb{P}_{\pol} \lb \forall t \leq T,\,
\rvstate_t \in \safeset \mid \rvstate_0 = \state \rb \geq 1-\delta \big\}.
\end{aligned}
\end{equation}
That is, there exists a policy such that the system remains within the set of safe states with high probability $1-\delta$ for a considered finite time horizon $T \in \mathbb{N}$ when starting from $\state$.
\end{definition}
States outside $\viabilitykernel$ form the unviability kernel, from which failure is unavoidable with high probability for horizon $T$.

\begin{definition}
    (\textbf{Viable Action}) For a viable state $\state_t \in \viabilitykernel$, a viable action $\viaact_t$ preserves viability of the next state.
\end{definition}
\begin{definition}\label{def:viable_policy}
(\textbf{Viable Policy}) The set of viable policies $\safepolset$ consists of all policies that select only viable actions.
\begin{equation}
\begin{aligned}
\safepolset &\coloneq \big\{\pi \in \polset \mid \forall \state \in \viabilitykernel: \\
& \mathbb{P}_{\pol} \lb \forall t \leq T,\,
\rvstate_t \in \safeset \mid \rvstate_0 = \state \rb \geq 1-\delta \big\}.
\end{aligned}
\end{equation}
\end{definition}
A safety filter $\filter : \polset \to \safepolset$ avoids failures during training by projecting the set of policies into the set of viable policies. This is typically achieved by mapping actions of the policy in $\polset$ to the closest viable action.
For control-affine dynamics $\dynfcn$, the discriminator between viable and unviable actions is a state-dependent hyperplane $\hyperplaneVector_t^\top \, \act \geq \hyperplaneOffset_t$ with a normal vector $\hyperplaneVector_t\in \R^{\na}$ and an offset $\hyperplaneOffset_t\in \R$ \cite{black-box-lavankul}. Thus, the safety filter problem is described by
\begin{equation}
\begin{aligned}
   \viaact_t =  \filter^\pol(\state_t) := &\arg \min_{\act \in \actsp} \| \act - \pol(\state_t) \|_2\\
   &\text{s.t. } \hyperplaneVector_t^\top  \act \geq \hyperplaneOffset_t.
\end{aligned}
\end{equation}
However, obtaining these state-dependent hyperplanes using classical methods~\cite{ames2019control, bansal2017hamilton} requires substantial domain knowledge and scales poorly with the system dimension. Thus,~\citet{black-box-lavankul} formulate the idea of training a filter policy $\filterpol(\state_t)$ that parametrizes the hyperplane via a function $\hyperplaneTransformationFunction$, resulting in the filter
\begin{equation}
\begin{aligned}
\label{eq:learned_filter}
   \act_t = & \filter^\pol_\filterpol(\state_t) := \arg \min_{\act \in \actsp} \| \act - \pol(\state_t) \|_2\\
   &\text{s.t. } \hyperplaneVector_t^\top  \act \geq \hyperplaneOffset_t \text{ with } (\hyperplaneVector_t, \hyperplaneOffset_t) = \hyperplaneTransformationFunction(\filterpol(\state_t)).
\end{aligned}
\end{equation}
This learned filter $\filter^\pol_\filterpol: \polset \rightarrow \mathrm{Img}(\filter^\pol_\filterpol)$ with image $\mathrm{Img}(\filter^\pol_\filterpol) \subseteq \polset$ does not necessarily project into $\safepolset$ for all $\filterpol$ in the set of filter policies $ \filterpol \in \filterpolset$, but a suitable $\mu$ yielding a valid safety filter needs to be learned.

\section{Problem Statement}
\label{sec:problem_statement}
\dynasaur{} requires solving two RL problems: the filter policy learning problem $\filterpol$ and the control policy learning problem $\pol$. Both of them leverage a learned uncertainty-aware dynamics model $\moddyn$ with a sufficiently certain subset $\accurset{}$ defined in \eqref{eq:accur_set} and the set of viable policies $\modelSafePolsSet{} \subseteq \safepolset$ that additionally remain within $\accurset{}$. Before defining $\modelSafePolsSet{}$, we formulate the \textbf{Filter Policy Learning Problem}\footnote{See Appendix \ref{subsec:clari_filter_prob} for a more detailed discussion of $\eqref{eq:filter_learning_problem}$.}
\begin{equation}
\begin{aligned}
\label{eq:filter_learning_problem}
\filterpol^* =& \argmax_{\filterpol \in \filterpolset} \{ \mathrm{Img}(\filter_{\filterpol}^\pol) \subseteq \modelSafePolsSet{} \}.
\end{aligned}
\end{equation}
That is, finding a filter policy $\filterpol^*$ that projects into the largest possible subset of $\modelSafePolsSet{}$. Projecting into a subset of $\modelSafePolsSet{}$ makes $\filter^\pol_\filterpol$ a valid safety filter, while aiming for the largest subset results in the least restrictive valid filter.

Given a filter policy $\filterpol$, we aim to find the control policy $\pol^*$ that maximizes the expected return of the filtered policy $\filter^\pol_\filterpol$, resulting in the \textbf{Control Policy Learning Problem}
\begin{equation}
\label{eq:filtered_cont_learning_prob}
            \pol^* = \argmax_{\pol \in \polset}\mathbb{E}_{\filter^\pol_\filterpol} \lb \sum_{t=0}^{\infty} \discount^t \rvrew_{t+1} \rb.
\end{equation}
To address the filter policy learning problem \eqref{eq:filter_learning_problem} via MBRL in a scalable fashion, we answer the following questions:
\begin{compactitem}
\item[\emph{(i)}] \emph{How does viability extend to uncertainty-aware dynamics models, i.e., how does $\modelSafePolsSet{}$ look like? }
\item[\emph{(ii)}] \emph{What is a unique representation of the hyperplane parameters $\hyperplaneVector$ and $\hyperplaneOffset$ for an efficient action space design?}
\item[\emph{(iii)}] \emph{What is an informative data distribution to learn $\filterpol$ in the face of a finite sampling budget from the model?}
\end{compactitem}
Subsequently, we combine recent results in model-based data generation \cite{Frauenknecht2025Infoprop}, actor-critic learning \cite{fasttd3}, and filtered policy learning~\cite{safetyFilterWhileTraining, markgraf2025SafeReinforcementLearning}
to address control policy learning \eqref{eq:filtered_cont_learning_prob}.

We limit the input to the learning algorithm to some initial data $\buffer_0^{\moddyn, \dynfcn}$ from the environment and the failure set $\failureset$.

The remainder of this paper focuses on the key technical contributions for solving \eqref{eq:filter_learning_problem}, in particular, Section \ref{sec:model_viability} addresses question \emph{(i)} and Section \ref{sec:method} questions \emph{(ii)} and \emph{(iii)}. For a detailed description of the full algorithm, including the solution to $\eqref{eq:filtered_cont_learning_prob}$, we refer to Appendix \ref{app:pseudocode}.

\section{Viability in the Uncertainty-Aware Model}
\label{sec:model_viability}
To prevent failures in the environment, $\filterpol$ is trained exclusively using the dynamics model. Thus, we need to address question \emph{(i)} and extend the viability concepts introduced in Section \ref{subsec:safe_learning} to account not only for safety but also certainty under the model dynamics.
 Based on the sufficiently certain subset $\accurset{}$, the certain safe set $\safesetSFi{}$ comprises state-action pairs where the model is certain and the state is safe
\begin{equation}
\safesetSFi{}  :=\big\{ (\state_t, \act_t) \in \statesp \times \actsp \mid \entropy(\modrvstate_{t+1}) \leq \singleStepThreshold, \state_t\in \safeset\big\}.
\end{equation}
Consequently, the finite-horizon certain viability kernel
\begin{equation}
\begin{aligned}
    & \modelViableSeti{} := \Big\{ \state \in \statesp \mid \exists \pol \in \polset:\\ & \mathbb{P}_{\pol} \lb \forall t \leq T,
(\modrvstate_t,\rvact_t) \in \safesetSFi{} \mid \rvstate_0 = \state \rb \geq 1-\delta \Big\}
\end{aligned}
\end{equation}
comprises all states from which a policy exists that keeps the model in the certain safe set for a given time horizon with high probability.
Finally, the set of certain viable policies 
\begin{equation}
\begin{aligned}
&\modelSafePolsSet{} \coloneq \Big\{\pi \in \polset \mid \forall \state \in \modelViableSeti{}: \\
& \mathbb{P}_{\pol} \lb \forall t \leq T,
(\modrvstate_t,\rvact_t) \in \safesetSFi{} \mid \rvstate_0 = \state \rb \geq 1-\delta \Big\}.
\label{eq:viable_pol_in_model}
\end{aligned}
\end{equation}
allows to stay within the certain viability kernel with high probability.
Thus, using the model $\moddyn$ to train $\filterpol$ restricts the filter learning objective~\eqref{eq:filter_learning_problem} to the subspace $\modelSafePolsSet{} \subseteq \safepolset$, which becomes less restrictive as $\accurset{}$ grows.

\section{\dynasaur{}: \dynasaurFull{ }}
\label{sec:method}
\begin{algorithm}[tb]
\caption{\dynasaur}
\label{alg:dynasaur}
\begin{algorithmic}
\INPUT $\buffer^{\moddyn, \dynfcn}_0$, randomly initialized $\moddyn_0$, $\pol_0$ and $\filterpol_0$
\FOR{$\algiter \in \{1, \dots, \maxalgiter \}$ \dynasaur~iterations}
\STATE Train dynamics model $\moddyn_{\algiter}$ using data from $\buffer^{\moddyn, \dynfcn}_{\algiter-1}$ \\(Algorithm \ref{alg:model_learning})
\WHILE{Filter evaluation not successful}
\STATE Train filter policy $\filterpol_{\algiter}$ using $\moddyn_{\algiter}$ and $\pol_{\algiter-1}$\\ (Algorithm \ref{alg:filter_learning})
\STATE Evaluate filter policy $\filterpol_{\algiter}$ (Algorithm \ref{alg:filter_eval})
\ENDWHILE
\STATE Train control policy $\pol_\algiter$ using $\moddyn_{\algiter}$ and $\filterpol_{\algiter}$ and collect environment data $\buffer^{\moddyn, \dynfcn}_{\algiter}$ (Algorithm \ref{alg:performance_learning})
\ENDFOR
\OUTPUT $\moddyn_{\maxalgiter}, \filterpol_{\maxalgiter}, \pol_{\maxalgiter}$
\end{algorithmic}
\end{algorithm}
In the following, we introduce the \dynasaur{} architecture with a focus on solving the filter policy learning problem \eqref{eq:filter_learning_problem} using MBRL. In particular, we introduce the MDP for filter policy learning in Section \ref{subsec:filter_mdp}, present an efficient action space representation in Section \ref{subsec:safe_act_formulation}, formulate the learning problem in Section \ref{subsec:safe_rew_formulation}, and propose practical approximations to make the learning problem scalable in Section \ref{subsec:safe_dist_formulation}.

Algorithm \ref{alg:dynasaur} outlines the iterative training process.\footnote{We use iteration index $\algiter$ as a subscript for learnable objects, denoting parameters at convergence after the $\algiter^{\mathrm{th}}$ iteration.} First, the dynamics model $\moddyn$ is trained on initial environment data $\buffer^{\moddyn, \dynfcn}_0$. Second, the filter policy $\filterpol$ is trained exclusively on model-based rollouts from $\moddyn$ until passing model-based evaluation. Third, the control policy $\pol$ is trained using both model-based data and environment interactions while being filtered through $\filterpol$. Observed environment transitions are added to $\buffer^{\moddyn, \dynfcn}_{\algiter}$. Retraining $\moddyn$ on this expanded data in iteration $\algiter+1$ enlarges $\accurset{}$, enabling a less restrictive $\filterpol$. At each \dynasaur{} iteration $\algiter$, both the filter policy $\filterpol$ and the control policy $\pol$ are retrained from scratch, to avoid local optima \cite{nikishin2022primacy}.

Next, we discuss learning the filter policy $\filterpol$ and refer to Appendix~\ref{app:pseudocode} for a detailed description of the full algorithm.

\subsection{Safety Filter MDP}
\label{subsec:filter_mdp}
Formulating the filter policy objective \eqref{eq:filter_learning_problem} as an RL problem requires defining the safety filter MDP $\mdpsf = \left(\statesp,\actspsf,\dynfcnsf, \rewfcnsf, \initdistsf, \discountsf \right)$, which shares only the state space $\statesp$ with the control MDP $\mdp$ from Section \ref{subsec:mbrl}. We define the space of hyperplane actions $\actspsf$, such that $\filterpol: \statesp \rightarrow \actspsf$ with a subsequent bijective transform $\hyperplaneTransformationFunction$ that maps $\asf_t$ to $\hyperplaneVector_t$ and $\hyperplaneOffset_t$. The benefits of this design are discussed in Section \ref{subsec:safe_act_formulation}. The dynamics $\dynfcnsf: \statesp \times \actspsf \rightarrow \Delta(\statesp)$ are depicted in Figure \ref{fig:filter_mdp}. They comprise elements of the standard RL problem, namely any control policy $\pol \in \polset$ and the environment dynamics $\dynfcn$ approximated by the model $\moddyn$, as well as the safety filter components comprising the bijective transformation $\hyperplaneTransformationFunction$ and the safety filter $\filter$. Further, the safety filter reward function $\rewfcnsf: \statesp \times \actspsf \times \statesp \rightarrow \mathbb{R}$ needs to reflect the goal of finding a least restrictive projection $ \polset \rightarrow \safepolset(\accurset{})$ formulated in \eqref{eq:filter_learning_problem} and is discussed in Section \ref{subsec:safe_rew_formulation}. Since $\filterpol$ trains purely on model-based rollouts, the start state distribution $\initdistsf \subseteq \statesp$ does not need to match $\initdist$. We define a more informative distribution in Section \ref{subsec:safe_dist_formulation}. Finally, the discount factor $\discountsf$ influences the time horizon $T$ in \eqref{eq:viable_pol_in_model} as it induces a random stopping time with expectation $\mathbb{E}[T] = \frac{1}{1-\discountsf}$.
\begin{figure}[tb]
    \centering
    \tikzset{every picture/.style={line width=0.75pt}} %
\resizebox{\linewidth}{!}{
\begin{tikzpicture}[x=0.75pt,y=0.75pt,yscale=-1,xscale=1]
\draw [color={rgb, 255:red, 23; green, 190; blue, 207 }  ,draw opacity=1 ][line width=1.5]  [dash pattern={on 5.63pt off 4.5pt}]  (40,70) -- (100,70) -- (100,10) -- (260,10) -- (260,120) -- (40,120) -- cycle ;

\draw  [draw opacity=0][fill={rgb, 255:red, 23; green, 190; blue, 207 }  ,fill opacity=0.08 ] (40,70) -- (100,70) -- (100,120) -- (40,120) -- cycle ;
\draw  [draw opacity=0][fill={rgb, 255:red, 23; green, 190; blue, 207 }  ,fill opacity=0.08 ][dash pattern={on 5.63pt off 4.5pt}][line width=1.5]  (100,10) -- (260,10) -- (260,120) -- (100,120) -- cycle ;
\draw  [color={rgb, 255:red, 31; green, 119; blue, 180 }  ,draw opacity=1 ][fill={rgb, 255:red, 31; green, 119; blue, 180 }  ,fill opacity=0.2 ] (50,20) -- (90,20) -- (90,50) -- (50,50) -- cycle ;
\draw  [color={rgb, 255:red, 255; green, 127; blue, 14 }  ,draw opacity=1 ][fill={rgb, 255:red, 255; green, 127; blue, 14 }  ,fill opacity=0.2 ] (50,80) -- (90,80) -- (90,110) -- (50,110) -- cycle ;
\draw  [color={rgb, 255:red, 44; green, 160; blue, 44 }  ,draw opacity=1 ][fill={rgb, 255:red, 44; green, 160; blue, 44 }  ,fill opacity=0.2 ] (130,20) -- (170,20) -- (170,50) -- (130,50) -- cycle ;
\draw  [color={rgb, 255:red, 44; green, 160; blue, 44 }  ,draw opacity=1 ][fill={rgb, 255:red, 44; green, 160; blue, 44 }  ,fill opacity=0.2 ] (130,80) -- (170,80) -- (170,110) -- (130,110) -- cycle ;
\draw  [color={rgb, 255:red, 127; green, 127; blue, 127 }  ,draw opacity=1 ][fill={rgb, 255:red, 127; green, 127; blue, 127 }  ,fill opacity=0.2 ] (210,80) -- (250,80) -- (250,110) -- (210,110) -- cycle ;
\draw    (90,35) -- (128,35) ;
\draw [shift={(130,35)}, rotate = 180] [fill={rgb, 255:red, 0; green, 0; blue, 0 }  ][line width=0.08]  [draw opacity=0] (12,-3) -- (0,0) -- (12,3) -- cycle    ;
\draw    (90,95) -- (128,95) ;
\draw [shift={(130,95)}, rotate = 180] [fill={rgb, 255:red, 0; green, 0; blue, 0 }  ][line width=0.08]  [draw opacity=0] (12,-3) -- (0,0) -- (12,3) -- cycle    ;
\draw    (170,95) -- (208,95) ;
\draw [shift={(210,95)}, rotate = 180] [fill={rgb, 255:red, 0; green, 0; blue, 0 }  ][line width=0.08]  [draw opacity=0] (12,-3) -- (0,0) -- (12,3) -- cycle    ;
\draw    (250,95) -- (270,95) -- (270,150) -- (10,150) -- (10,96) -- (48,96) ;
\draw [shift={(50,96)}, rotate = 180] [fill={rgb, 255:red, 0; green, 0; blue, 0 }  ][line width=0.08]  [draw opacity=0] (12,-3) -- (0,0) -- (12,3) -- cycle    ;
\draw    (150,50) -- (150,78) ;
\draw [shift={(150,80)}, rotate = 270] [fill={rgb, 255:red, 0; green, 0; blue, 0 }  ][line width=0.08]  [draw opacity=0] (12,-3) -- (0,0) -- (12,3) -- cycle    ;
\draw    (10,96) -- (10,35) -- (48,35) ;
\draw [shift={(50,35)}, rotate = 180] [fill={rgb, 255:red, 0; green, 0; blue, 0 }  ][line width=0.08]  [draw opacity=0] (12,-3) -- (0,0) -- (12,3) -- cycle    ;
\draw  [dash pattern={on 4.5pt off 4.5pt}]  (200,136) -- (200,148.8) -- (200,169) ;

\draw (70,35) node   [align=left] {$\displaystyle \filterpol $};
\draw (70,95) node   [align=left] {$\displaystyle \pol $};
\draw (150,35) node   [align=left] {$\displaystyle \hyperplaneTransformationFunction$};
\draw (150,95) node   [align=left] {$\displaystyle \filter _{\filterpol}^{\pol }$};
\draw (230,95) node   [align=left] {$\displaystyle \dynfcn$};
\draw (229.98,140) node   [align=left] {$\displaystyle \state_{t+1}$};
\draw (150.04,140) node   [align=left] {$\displaystyle \state_{t}$};
\draw (110,81) node   [align=left] {$\displaystyle \act_{t}$};
\draw (110,21) node   [align=left] {$\displaystyle \asf_{t}$};
\draw (126,61) node   [align=left] {$\displaystyle \hyperplaneVector_{t} ,\hyperplaneOffset_{t}$};
\draw (188.5,81) node   [align=left] {$\displaystyle \filteredAction_{t}$};
\draw (237.91,27.25) node  [font=\large] [align=left] {$\displaystyle \textcolor[rgb]{0.09,0.75,0.81}{\dynfcnsf}$};

\end{tikzpicture}
}
    \vspace{\figmargin}
      \caption{Safety Filter MDP Dynamics $\dynfcnsf$. \emph{Given a state $\state_t$ the control policy $\pol$ generates a control action $\act_t$. The safety filter $\filter$ \eqref{eq:learned_filter} is parametrized through a hyperplane action $\asf_t$ by the filter policy $\filterpol$, via the bijective transform $\hyperplaneTransformationFunction$, and yields a viable control action $\viaact_t$. The control MDP dynamics $\dynfcn$ transition based on $\viaact_t$.
      }
      }
\label{fig:filter_mdp} 
\end{figure}
\subsection{Efficient Filter Action Space Formulation}
\label{subsec:safe_act_formulation}
Addressing question \emph{(i)}, we present a novel, expressive design of $\actspsf$. \citet{black-box-lavankul} propose learning hyperplane parameters directly as $\tilde{\asf}_t = [\hyperplaneVector_t^\top, \hyperplaneOffset_t]^\top$, which causes several problems. First, this formulation is unbounded with $\tilde{\actspsf} = \mathbb{R}^{\na + 1}$ and overparametrized since $c\,\hyperplaneVector_t^\top \cdot \act \geq c\,\hyperplaneOffset_t$ yields identical hyperplanes for any $c \in \mathbb{R}$. Second, $\tilde{\actspsf}$ does not enforce hyperplane intersection with the bounded control action space $\actsp$. These issues create a large, ambiguous search space that unnecessarily complicates filter learning.

Instead, we define the space of hyperplane actions as the unit hyperball in $\mathbb{R}^{\na}$
\begin{equation}
\label{eq:hyperplaneactsp}
\actspsf := \left\{\asf \in \mathbb{R}^{\na} \,\middle|\, 0<\|\asf\|_2 \leq 1 \right\},
\end{equation}
resulting in a bounded search space. Subsequently, we obtain the hyperplane parameters via the bijective function $\hyperplaneTransformationFunction: \actspsf \rightarrow \R^{\na} \times \R$ that enforces intersection with $\actsp$.

\begin{theorem}
The function $\hyperplaneTransformationFunction$ mapping each $\asf \in \actspsf$ to a discriminating hyperplane intersecting $\actsp = [-1, 1]^{\na}$ is bijective. It is defined as $\hyperplaneTransformationFunction(\asf_t) = (\hyperplaneVector_t,\hyperplaneOffset_t)$ with
\begin{equation}
    \hyperplaneVector_t = \frac{\asf_t}{\Vert \asf_t \Vert_2}, \text{ and } \hyperplaneOffset_t = (2\Vert \asf_t \Vert_2 -1 ) \Vert  \hyperplaneVector_t \Vert _1.
\end{equation}
\proof{See Appendix~\ref{app:proofTransformation}, Theorem \ref{theo:bijection_app}}
\end{theorem}
Figure \ref{fig:hyperplane_parameterization} illustrates the transformation. The normal vector $\hyperplaneVector_t$ points in the direction of $\asf_t$, where actions $\act_t$ in the positive direction of $\hyperplaneVector_t$ are classified as viable. The offset $\hyperplaneOffset_t$ is computed using the distance from the origin to the closest vertex of $\actsp$ along $\hyperplaneVector_t$, given by $\Vert \hyperplaneVector_t \Vert_1$ as shown in Lemma \ref{lemma:MaxOffset} in Appendix \ref{app:proofTransformation}. We scale this distance by $(2\Vert \asf_t \Vert_2 - 1) \in [-1, 1]$ by construction in \eqref{eq:hyperplaneactsp}. Thus, $\Vert \asf_t \Vert_2$ indicates restrictiveness: values near $0$ classify most actions as viable, while values near $1$ classify most as unviable.

This design yields a minimal search space for $\filterpol$, improving the scalability of data-driven safety filter learning.

\begin{figure}[tb]
    \centering
     
\tikzset{
pattern size/.store in=\mcSize, 
pattern size = 5pt,
pattern thickness/.store in=\mcThickness, 
pattern thickness = 0.3pt,
pattern radius/.store in=\mcRadius, 
pattern radius = 1pt}
\makeatletter
\pgfutil@ifundefined{pgf@pattern@name@_8g72u0akp}{
\pgfdeclarepatternformonly[\mcThickness,\mcSize]{_8g72u0akp}
{\pgfqpoint{0pt}{0pt}}
{\pgfpoint{\mcSize+\mcThickness}{\mcSize+\mcThickness}}
{\pgfpoint{\mcSize}{\mcSize}}
{
\pgfsetcolor{\tikz@pattern@color}
\pgfsetlinewidth{\mcThickness}
\pgfpathmoveto{\pgfqpoint{0pt}{0pt}}
\pgfpathlineto{\pgfpoint{\mcSize+\mcThickness}{\mcSize+\mcThickness}}
\pgfusepath{stroke}
}}
\makeatother

\tikzset{
pattern size/.store in=\mcSize, 
pattern size = 5pt,
pattern thickness/.store in=\mcThickness, 
pattern thickness = 0.3pt,
pattern radius/.store in=\mcRadius, 
pattern radius = 1pt}
\makeatletter
\pgfutil@ifundefined{pgf@pattern@name@_djn5wu5gt}{
\pgfdeclarepatternformonly[\mcThickness,\mcSize]{_djn5wu5gt}
{\pgfqpoint{0pt}{0pt}}
{\pgfpoint{\mcSize+\mcThickness}{\mcSize+\mcThickness}}
{\pgfpoint{\mcSize}{\mcSize}}
{
\pgfsetcolor{\tikz@pattern@color}
\pgfsetlinewidth{\mcThickness}
\pgfpathmoveto{\pgfqpoint{0pt}{0pt}}
\pgfpathlineto{\pgfpoint{\mcSize+\mcThickness}{\mcSize+\mcThickness}}
\pgfusepath{stroke}
}}
\makeatother

\tikzset{
pattern size/.store in=\mcSize, 
pattern size = 5pt,
pattern thickness/.store in=\mcThickness, 
pattern thickness = 0.3pt,
pattern radius/.store in=\mcRadius, 
pattern radius = 1pt}
\makeatletter
\pgfutil@ifundefined{pgf@pattern@name@_zps63uxue}{
\pgfdeclarepatternformonly[\mcThickness,\mcSize]{_zps63uxue}
{\pgfqpoint{0pt}{0pt}}
{\pgfpoint{\mcSize+\mcThickness}{\mcSize+\mcThickness}}
{\pgfpoint{\mcSize}{\mcSize}}
{
\pgfsetcolor{\tikz@pattern@color}
\pgfsetlinewidth{\mcThickness}
\pgfpathmoveto{\pgfqpoint{0pt}{0pt}}
\pgfpathlineto{\pgfpoint{\mcSize+\mcThickness}{\mcSize+\mcThickness}}
\pgfusepath{stroke}
}}
\makeatother
\tikzset{every picture/.style={line width=0.75pt}} %
\resizebox{\linewidth}{!}{
\begin{tikzpicture}[x=0.75pt,y=0.75pt,yscale=-1,xscale=1]

\draw  [draw opacity=0][pattern=_8g72u0akp,pattern size=6pt,pattern thickness=0.75pt,pattern radius=0pt, pattern color={rgb, 255:red, 44; green, 160; blue, 44}] (424,10) -- (522.36,108.22) -- (534.2,96.36) -- (435.85,-1.86) -- cycle ;
\draw  [draw opacity=0][pattern=_djn5wu5gt,pattern size=6pt,pattern thickness=0.75pt,pattern radius=0pt, pattern color={rgb, 255:red, 44; green, 160; blue, 44}] (487,118) -- (348,118) -- (348,134.76) -- (487,134.76) -- cycle ;
\draw  [draw opacity=0][pattern=_zps63uxue,pattern size=6pt,pattern thickness=0.75pt,pattern radius=0pt, pattern color={rgb, 255:red, 44; green, 160; blue, 44}] (309.57,148.79) -- (417,237) -- (427.64,224.04) -- (320.21,135.83) -- cycle ;
\draw   (25.11,115.26) .. controls (25.11,67.07) and (65.6,28) .. (115.55,28) .. controls (165.5,28) and (205.99,67.07) .. (205.99,115.26) .. controls (205.99,163.46) and (165.5,202.53) .. (115.55,202.53) .. controls (65.6,202.53) and (25.11,163.46) .. (25.11,115.26) -- cycle ;
\draw    (14,115.26) -- (224,115.26) ;
\draw [shift={(226,115.26)}, rotate = 180] [fill={rgb, 255:red, 0; green, 0; blue, 0 }  ][line width=0.08]  [draw opacity=0] (12,-3) -- (0,0) -- (12,3) -- cycle    ;
\draw    (116,217) -- (116,12) ;
\draw [shift={(116,10)}, rotate = 90] [fill={rgb, 255:red, 0; green, 0; blue, 0 }  ][line width=0.08]  [draw opacity=0] (12,-3) -- (0,0) -- (12,3) -- cycle    ;
\draw [color={rgb, 255:red, 140; green, 86; blue, 75 }  ,draw opacity=1 ][line width=1.5]    (115.55,115.26) -- (127.52,100.14) ;
\draw [shift={(130,97)}, rotate = 128.36] [fill={rgb, 255:red, 140; green, 86; blue, 75 }  ,fill opacity=1 ][line width=0.08]  [draw opacity=0] (11.61,-5.58) -- (0,0) -- (11.61,5.58) -- cycle    ;
\draw [color={rgb, 255:red, 255; green, 127; blue, 14 }  ,draw opacity=1 ][line width=2.25]    (130,97) -- (157,61) ;
\draw [shift={(160,57)}, rotate = 126.87] [fill={rgb, 255:red, 255; green, 127; blue, 14 }  ,fill opacity=1 ][line width=0.08]  [draw opacity=0] (14.29,-6.86) -- (0,0) -- (14.29,6.86) -- cycle    ;
\draw [color={rgb, 255:red, 188; green, 189; blue, 34 }  ,draw opacity=1 ][line width=1.5]    (115.55,115.26) -- (115.05,159) ;
\draw [shift={(115,163)}, rotate = 270.66] [fill={rgb, 255:red, 188; green, 189; blue, 34 }  ,fill opacity=1 ][line width=0.08]  [draw opacity=0] (11.61,-5.58) -- (0,0) -- (11.61,5.58) -- cycle    ;
\draw    (111,27) -- (121,27) ;
\draw    (111,203) -- (121,203) ;
\draw [line width=3]    (230,120) -- (284,120) ;
\draw [shift={(290,120)}, rotate = 180] [fill={rgb, 255:red, 0; green, 0; blue, 0 }  ][line width=0.08]  [draw opacity=0] (16.97,-8.15) -- (0,0) -- (16.97,8.15) -- cycle    ;
\draw   (327,27) -- (507,27) -- (507,207) -- (327,207) -- cycle ;
\draw    (417,225) -- (417,11) ;
\draw [shift={(417,9)}, rotate = 90] [fill={rgb, 255:red, 0; green, 0; blue, 0 }  ][line width=0.08]  [draw opacity=0] (12,-3) -- (0,0) -- (12,3) -- cycle    ;
\draw    (294,117) -- (526,117) ;
\draw [shift={(528,117)}, rotate = 180] [fill={rgb, 255:red, 0; green, 0; blue, 0 }  ][line width=0.08]  [draw opacity=0] (12,-3) -- (0,0) -- (12,3) -- cycle    ;
\draw [color={rgb, 255:red, 140; green, 86; blue, 75 }  ,draw opacity=1 ]   (352.55,185.26) -- (365.14,169.35) ;
\draw [shift={(367,167)}, rotate = 128.36] [fill={rgb, 255:red, 140; green, 86; blue, 75 }  ,fill opacity=1 ][line width=0.08]  [draw opacity=0] (8.93,-4.29) -- (0,0) -- (8.93,4.29) -- cycle    ;
\draw [color={rgb, 255:red, 140; green, 86; blue, 75 }  ,draw opacity=1 ][line width=1.5]    (417,237) -- (307,147) ;
\draw [color={rgb, 255:red, 255; green, 127; blue, 14 }  ,draw opacity=1 ][line width=1.5]    (470,57) -- (487.17,39.83) ;
\draw [shift={(490,37)}, rotate = 135] [fill={rgb, 255:red, 255; green, 127; blue, 14 }  ,fill opacity=1 ][line width=0.08]  [draw opacity=0] (11.61,-5.58) -- (0,0) -- (11.61,5.58) -- cycle    ;
\draw [color={rgb, 255:red, 255; green, 127; blue, 14 }  ,draw opacity=1 ][line width=1.5]    (520,107) -- (420,7) ;
\draw [color={rgb, 255:red, 188; green, 189; blue, 34 }  ,draw opacity=1 ][line width=1.5]    (417,117) -- (417,139) ;
\draw [shift={(417,143)}, rotate = 270] [fill={rgb, 255:red, 188; green, 189; blue, 34 }  ,fill opacity=1 ][line width=0.08]  [draw opacity=0] (11.61,-5.58) -- (0,0) -- (11.61,5.58) -- cycle    ;
\draw [color={rgb, 255:red, 188; green, 189; blue, 34 }  ,draw opacity=1 ][line width=1.5]    (330,117) -- (510,117) ;
\draw    (24.11,110.26) -- (24.11,120.26) ;
\draw    (207,110) -- (207,120) ;
\draw    (412.63,26) -- (422.63,26) ;
\draw    (412,208) -- (422,208) ;
\draw    (326,112) -- (326,122) ;
\draw    (508,112) -- (508,122) ;

\draw (90,89) node [anchor=north west][inner sep=0.75pt]  [font=\large] [align=left] {$\displaystyle \textcolor[rgb]{0.55,0.34,0.29}{\asf}\textcolor[rgb]{0.55,0.34,0.29}{_{1}}$};
\draw (120,59) node [anchor=north west][inner sep=0.75pt]  [color={rgb, 255:red, 255; green, 127; blue, 14 }  ,opacity=1 ] [align=left] {$\displaystyle \textcolor[rgb]{1,0.5,0.05}{\asf}\textcolor[rgb]{1,0.5,0.05}{_{\textcolor[rgb]{1,0.5,0.05}{2}}}$};
\draw (90,122) node [anchor=north west][inner sep=0.75pt]  [font=\large] [align=left] {$\displaystyle \textcolor[rgb]{0.74,0.74,0.13}{\asf}\textcolor[rgb]{0.74,0.74,0.13}{_{3}}$};
\draw (210,97) node [anchor=north west][inner sep=0.75pt]   [align=left] {1};
\draw (124.38,18.5) node   [align=left] {1};
\draw (126.67,209.5) node   [align=left] {\mbox{-}1};
\draw (8,98) node [anchor=north west][inner sep=0.75pt]   [align=left] {\mbox{-}1};
\draw (247,92) node [anchor=north west][inner sep=0.75pt]  [font=\large] [align=left] {$\displaystyle \hyperplaneTransformationFunction$};
\draw (427.67,216.5) node   [align=left] {\mbox{-}1};
\draw (315.67,126.5) node   [align=left] {\mbox{-}1};
\draw (515.38,124.5) node   [align=left] {1};
\draw (325,2) node [anchor=north west][inner sep=0.75pt]  [font=\large] [align=left] {$\displaystyle \actsp$};
\draw (372,146.5) node  [font=\large,color={rgb, 255:red, 3; green, 0; blue, 8 }  ,opacity=1 ] [align=left] {$\displaystyle \textcolor[rgb]{0.01,0,0.03}{\hyperplaneVector\cdot \act \geq \hyperplaneOffset}$};
\draw (41,20) node [anchor=north west][inner sep=0.75pt]  [font=\large] [align=left] {$\displaystyle \actspsf$};
\draw (426,17.5) node   [align=left] {1};

\end{tikzpicture}

}
        \vspace{\figmargin}
      \caption{Hyperplane Action Space $\actspsf$. \emph{Example mappings between hyperplane actions $\asf \in \actspsf$ and discriminating hyperplanes in $\actsp$. Larger $\asf$ correspond to more restrictive hyperplanes.}
      }
    \label{fig:hyperplane_parameterization} 
\end{figure}
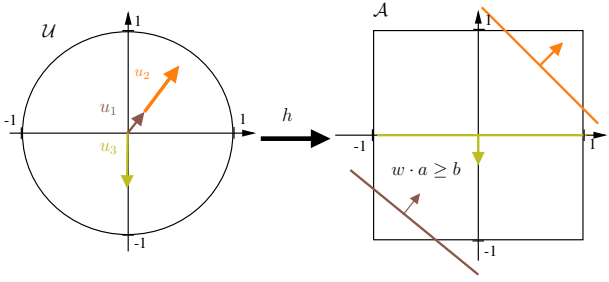

\subsection{Filter Policy Learning as an RL Problem}
\label{subsec:safe_rew_formulation}
Next, we aim for a general reward function that enforces to stay within $\safesetSFi{}$ under dynamics $\dynfcnsf$. We define
\begin{equation}\label{eq:reward_func_safe}
\rewfcnsf(\state_t,\asf_t,\state_{t+1}) =
\begin{cases}
1, & \text{if } \state_{t+1} \in \safesetSFi{}, \\
-\frac{1}{1-\discountsf}, & \text{otherwise}
\end{cases}
\end{equation}
and terminate, whenever $\safesetSFi{}$ is left. As a consequence, the safety filter action value function
\begin{equation}
    \filterqfcn(\state_t, \asf_t) := \mathbb{E}_{\asf \sim \filterpol, \state \sim \dynfcnsf}\left[ \sum_{t=0}^\infty \discountsf^t \rewsf_{t+1} \Big| \state_t, \asf_t \right]
\end{equation}
is bounded with $\filterqfcn(\state_t, \asf_t) \in [-\frac{1}{1-\discountsf}, \frac{1}{1-\discountsf}]$ for all state, filter action pairs $\state_t, \asf_t$. Here, a value of $-\frac{1}{1-\discountsf}$ corresponds to failing in the next transition, and $\frac{1}{1-\discountsf}$ corresponds to not failing at all. In particular, safety filter action values directly encode an expected time to failure
\begin{equation}
   \mathbb{E}_\filterpol \lb T_{\mathrm{F}}| \state \rb = \log_{\discountsf}\left( \frac{1-\filterqfcn(\state, \filterpol(\state))(1-\discountsf)}{2}\right)
\end{equation}
under $\filterpol$ as derived in Lemma \ref{lemma:time_fail} in Appendix \ref{app:time_to_failure}.
Thus, maximizing $\filterqfcn$ results in enforcing to stay within $\safesetSFi{}$ for as long as possible, which closely reflects \eqref{eq:viable_pol_in_model}.

To obtain not only a valid but also a least restrictive projection $\polset \rightarrow \safepolset(\accurset{})$, we leverage the fact that $\| \asf_t \|_2$ indicates filter restrictiveness, with lower norms corresponding to more viable actions. Thus, maximizing
\begin{equation}
    \filterpol(\state_t) \leftarrow \arg\max_{\filterpol \in \filterpolset} \filterqfcn\left(\state_t, \filterpol(\state_t)\right) - c \| \filterpol(\state_t) \|_2
    \label{eq:rl_filter_learning_objective}
\end{equation}
with regularization term $\| \filterpol(\state_t) \|_2$ scaled by $c$, closely resembles the original filter policy learning problem \eqref{eq:filter_learning_problem}.

\subsection{Addressing Stochasticity via Data Shaping}
\label{subsec:safe_dist_formulation}
Solving the RL objective \eqref{eq:rl_filter_learning_objective} in practice is challenging due to the randomness in the dynamics $\dynfcnsf$ depicted in Figure \ref{fig:filter_mdp}. While the filter components $\hyperplaneTransformationFunction$ and $\kappa$ are deterministic, and the control MDP dynamics $\dynfcn$ typically introduce only mild stochasticity, allowing any $\pol \in \polset$ makes $\dynfcnsf$ highly stochastic. Thus, areas in $\statesp \times \actspsf$ need to be heavily sampled to obtain reliable value estimates $\filterqfcn(\state_t, \asf_t)$ and train an effective filter policy in these areas. Learning an expressive $\filterqfcn$ everywhere, akin to Hamilton-Jacobi reachability methods, is infeasible for systems of the considered complexity ~\cite{wabersich2023data}. Instead, we leverage two principles that helped scale dynamic programming to modern deep RL: large-scale, parallelized data generation, and concentrating data and representational capacity on relevant regions of $\statesp \times \actspsf$. We use accurate long-horizon rollouts using the model $\moddyn$ \cite{Frauenknecht2025Infoprop} that can cheaply generate large amounts of data,  and actor-critic algorithms tailored for large datasets \cite{fasttd3}. Addressing question $\emph{(iii)}$, we next present heuristics for concentrating this sampling budget in meaningful areas of $\statesp \times \actspsf$.

Generally, RL methods with function approximation minimize the mean squared value error \cite{Sutton1998} 
\begin{equation}
    \xi := \int_{\statesp} \int_{\actspsf} \rho(\state, \asf) \left( q^{\mathrm{SF}}(\state, \asf) - \filterqfcn(\state, \asf)  \right)^2 \, \mathrm{d} \asf \,\mathrm{d}\state
    \label{eq:value_error}
\end{equation}
where $q^{\mathrm{SF}}$ is the true filter action value function and $\rho(\state, \asf)$ is the state-action distribution used for training. Equation~\eqref{eq:value_error} shows that $\filterqfcn$ is generally more accurate in high-density regions of $\rho(\state, \asf)$. This distribution depends on $\initdistsf$, $\filterpol$, and $\dynfcnsf$. We shape $\rho(\state, \asf)$ by oversampling informative states in $\initdistsf$ and finding a suitable representation of $\dynfcnsf$.

We use three sources for starting states $\initdistsf$ in model-based rollouts. First, $\initdistsf$ must naturally include the environment distribution $\rho_0$, approximated from $\buffer_0^{\moddyn, \dynfcn}$, to mimic rollouts in the environment.
Second, failed rollouts $\trajfail$ that left $\safesetSFi{}$ should be overrepresented, since $\filterpol$ does not yet provide a valid safety filter in these areas and requires more training. Third, rollouts $\trajret$ yielding high returns concerning the control reward $\rewfcn$ should be overrepresented, since an overly conservative filter would be especially harmful in these areas. Thus, we want to make sure that the filter policy RL objective \eqref{eq:rl_filter_learning_objective} converges concerning the regularization loss $\| \asf_t\|_2$ and finds a least restrictive projection. This yields
\begin{equation}
    \initdistsf = \nu_1 \initdist + \nu_2 \unidist(\trajfailset) + \nu_3 \unidist(\trajretset)
\end{equation}
with $\sum_{i=1}^3 \nu_i = 1$ and $\nu_i \geq 0$, which is the mixture distribution over $\initdist$ and uniformly sampling states from the trajectory sets $\trajfail \in \trajfailset$ and $\trajret \in \trajretset$. 

Finally, we need to approximate $\dynfcnsf$ with respect to the stochasticity induced by $\pol \in \polset$. However, $\polset$ is typically a continuous and unbounded search space. Thus, sampling policies at random is unlikely to yield satisfactory results in the face of a finite sampling budget. We, therefore, take another perspective on the problem. The projection $\polset \rightarrow \safepolset(\accurset{})$  into the set of viable policies is identical to projecting the state action occupancy of $\polset$
\begin{equation}
    \rho_{\pol \in \polset}(\state, \act) := \mathbb{E}_{\pol \in \polset} \rho_{\pol}(\state, \act).
\end{equation}
into the state action occupancy of viable policies $\rho_{\pol \in \safepolset(\accurset{}) }(\state, \act)$, that has no probability mass outside of $\safesetSFi{}$. So, if the filter projects from one policy set into another, it also projects the corresponding occupancy
\begin{equation}
    \filter_\filterpol^\pol : \polset \rightarrow \safepolset(\accurset{}) \iff \filter_\filterpol^\pol : \rho_{\pol \in \polset} \rightarrow \rho_{\pol \in \safepolset(\accurset{})}.
\end{equation}
Consequently, it suffices for filter policy learning to approximate $\rho_{\pol \in \polset}$ sufficiently well even if the control action generating process is not an element of $\polset$.

We approximate the state action occupancy of $\polset$ using a convex combination of two components: the occupancy of the latest control policy $\pol_{\algiter-1}$ with exploration noise $\noise$ scaled by $\expscale_2$, and the occupancy of pure noise $\noise$ scaled by $\expscale_3$:
\begin{equation}
    \rho_{\pol \in \polset}(\state, \act) \approx \nu_4 \rho_{\pol_{\algiter-1} + \expscale_2 \noise}(\state, \act) + \nu_5 \rho_{\expscale_3 \noise}(\state, \act),
\end{equation}
where $\nu_4 + \nu_5 = 1$ and $\nu_4, \nu_5 \geq 0$. We use temporally correlated pink noise $\noise \sim \mathrm{PN}$, which provides a favorable trade-off between local and global exploration \cite{eberhard-2023-pink}. This yields a sufficiently diverse approximation of $\rho_{\pol \in \polset}$ for learning a robust filter policy $\filterpol$, while overrepresenting the current best solution $\pol_{\algiter-1}$ to the control policy learning problem \eqref{eq:filtered_cont_learning_prob} making sure $\filterpol$ performs well for greedy control policies.

\section{Experiments and Discussion}
\label{sec:experiments}

We empirically evaluate \dynasaur{} on a low-dimensional ($\statesp \subseteq \R^4, \actsp \subseteq \R$) goal-reaching CartPole task for detailed analysis and the high-dimensional  ($\statesp \subseteq \R^{17}, \actsp \subseteq \R^6$) MuJoCo Walker task to illustrate the scalability of the method. The experiments will show that \dynasaur{} matches or excels the control performance of state-of-the-art safe RL approaches, while reducing accumulated failures during training by at least two orders of magnitude. We introduce the experimental setup in Section~\ref{subsec:expSetup}, discuss control performance, safety violations, and expansion of the initial data set in Section~\ref{subsec:contPerfSafety}, and ablate the design decisions introduced in Section~\ref{sec:method} in Section~\ref{subsec:ablation}.

\subsection{Experimental Setup}\label{subsec:expSetup}

The evaluation tasks are depicted in Figure~\ref{fig:benchmarked_envs}. First, goal-reaching CartPole extends the standard task \cite{towers2025gymnasium} by adding constraints to the pole angle $\theta$ and the cart position $x$, depicted in red, while the corresponding velocities $\dot{x}$ and $\dot{\theta}$ remain unconstrained. The control objective is reaching a goal cart position, marked in blue, which requires the cart to drive to the right while obeying safety constraints. Second, we extend MuJoCo Walker ~\cite{todorov_mujoco_2012}, with angle and position constraints on the torso, depicted in red. The objective remains walking as fast as possible. 

We compare \dynasaur{} to \emph{(a)} PPO-Lagrangian~\cite{Ray2019SafeExploration}, a state-of-the-art model-free safe RL approach that tries to minimize failures in expectation; \emph{(b)} DH-RL~\cite{black-box-lavankul}, a safe RL method that learns a safety filter policy through interaction with a simulator; and \emph{(c)} Infoprop-Dyna~\cite{Frauenknecht2025Infoprop}, a recent MBRL method that is not aware of safety as a baseline. \dynasaur{} can be viewed as an extension to DH-RL, as it learns a filter policy with action space $\actspsf$ (Section \ref{subsec:safe_act_formulation}) from a model, and to Infoprop-Dyna as it extends MBRL to safe RL.

We provide the initial dataset $\buffer^{\moddyn, \dynfcn}_0$ to both \dynasaur{} and Infoprop-Dyna, and pretrain PPO-Lagrangian and DH-RL for the same amount of transitions as stored in $\buffer^{\moddyn, \dynfcn}_0$ before reporting performance and counting safety violations.
Appendix~\ref{app:experimental_setup} provides a detailed discussion of the experimental setup, including the generation of $\buffer^{\moddyn, \dynfcn}_0$.
\begin{figure}[tb]
    \centering
    \begin{subfigure}[b]{0.53\linewidth}   \centering\includegraphics[width=\linewidth]{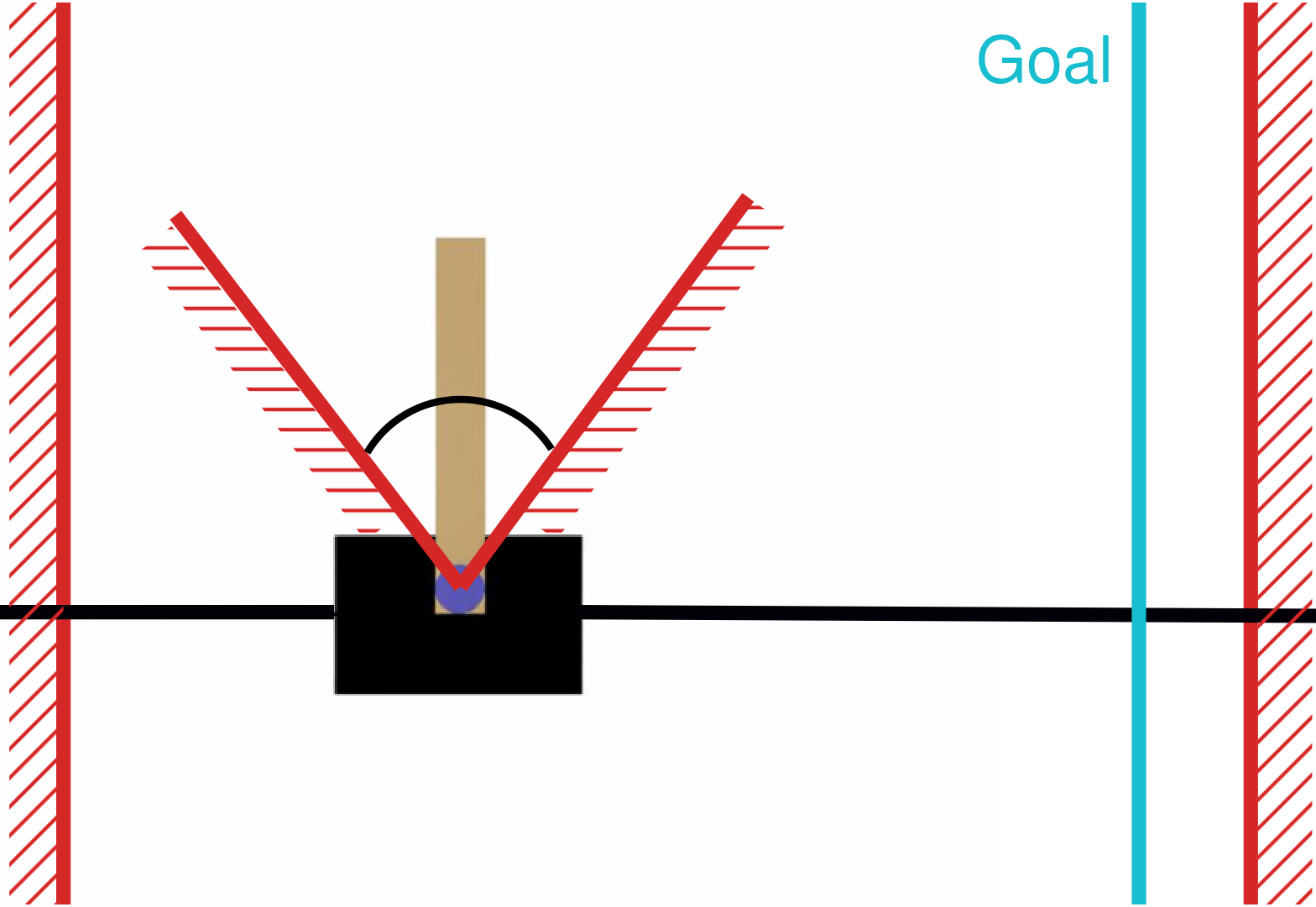}
        \caption{Goal-reaching CartPole with constraints (red) and goal (blue).}
        \label{fig:cartpole_safety}
    \end{subfigure}
    \hfill
    \begin{subfigure}[b]{0.43\linewidth}
        \centering
        \includegraphics[width=\linewidth]{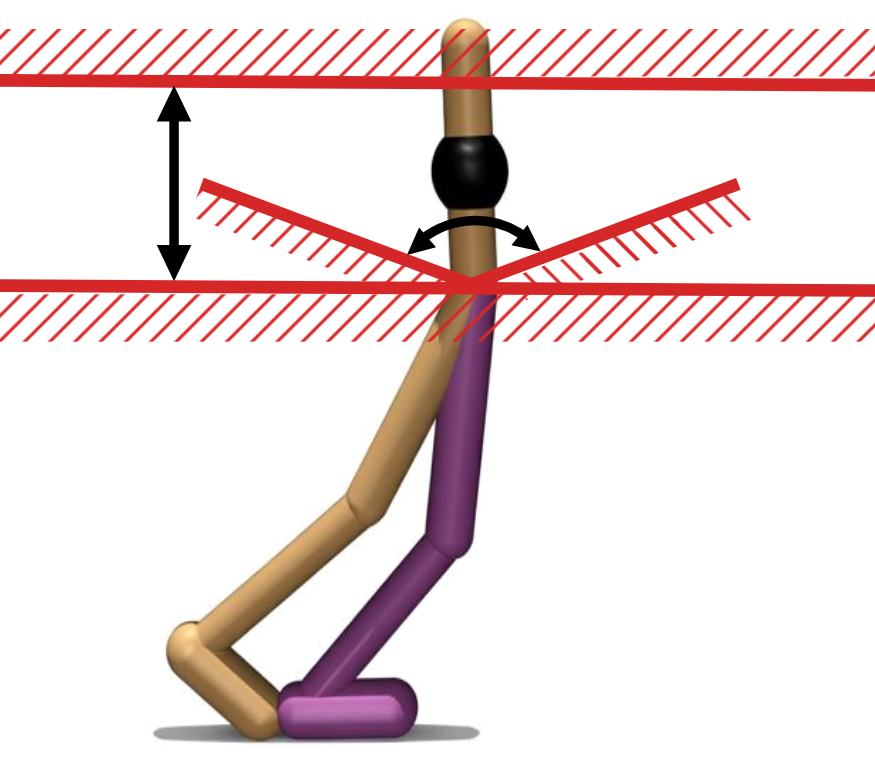}
        \caption{Walker with constraints (red).}
        \label{fig:walker_safety}
    \end{subfigure}
    \caption{Safe Learning Environments with Constraints}
    \vspace{\figmargin}
    \label{fig:benchmarked_envs}
\end{figure}

\subsection{Control Performance and Safety}\label{subsec:contPerfSafety}
\begin{figure*}[tb]
    \centering
\includegraphics[width=0.99\linewidth]{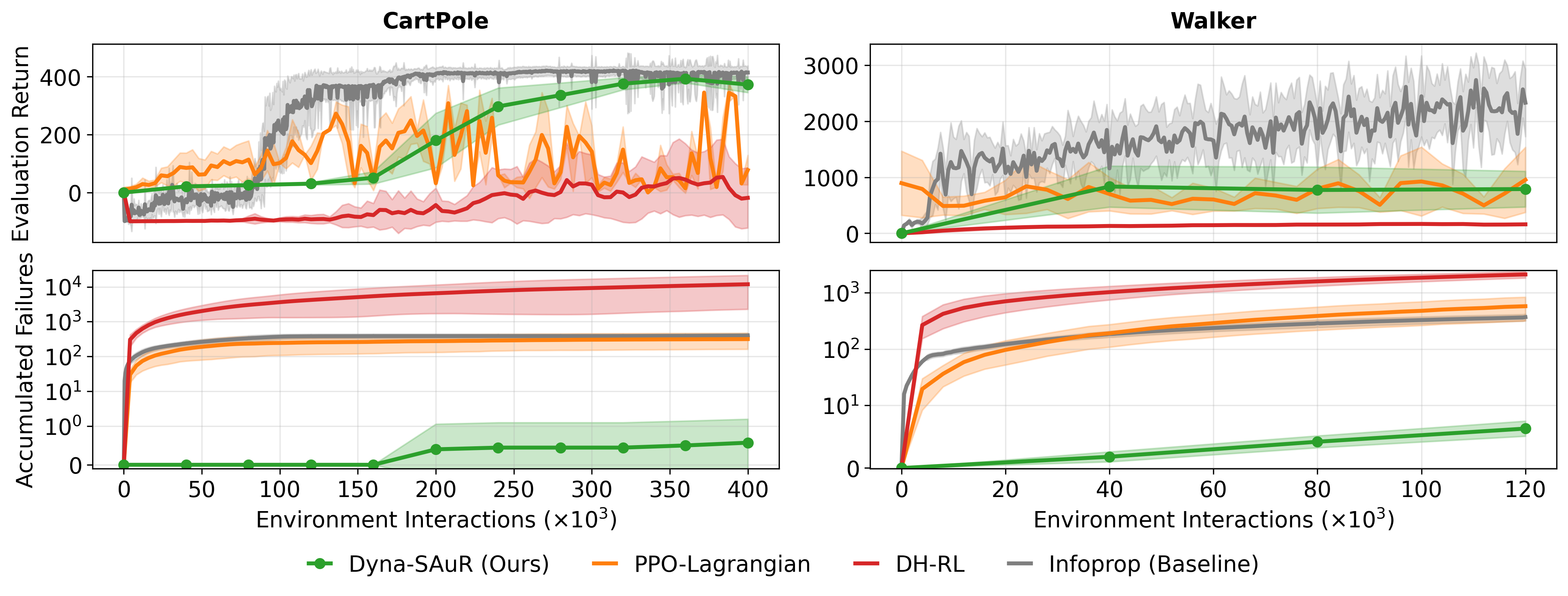}
    \caption{Control Return and Accumulated Failures (log scale) over Environment Interactions.\emph{ Experiments are run for 10 random seeds with solid lines representing the mean and shaded areas the 99\% confidence interval. We plot performance after every \dynasaur{} iteration as a dot, as the agent is retrained from scratch between iterations. All failures during retraining are reported. \dynasaur{} meets or excels over the control performance of safe RL algorithms (PPO-Lagrangian, DH-RL) while substantially reducing failures.}}
    \label{fig:resultsReturnFailureDynasaur}
\end{figure*}
Figure~\ref{fig:resultsReturnFailureDynasaur} reports control returns (top row) and accumulated failures (bottom row) over environment interactions for both tasks. Since each \dynasaur{} comprises retraining the policy from scratch, only the final evaluation return of each iteration is shown as a green dot to maintain readability. All failures incurred during training are included.

\dynasaur{} shows the strongest control performance of all safe RL approaches on goal-reaching CartPole and performs on par with PPO-Lagrangian on Walker. Infoprop-Dyna consistently outperforms the safe RL approaches concerning control performance, yet it is unaware of safety. \dynasaur{} reduces the amount of accumulated failures by at least two orders of magnitude compared to the other methods.

Figure \ref{fig:dataDistEnvs} investigates the exploration behavior of \dynasaur{}, providing insight to the learning dynamics. The top row compares the initial data set $\buffer^{\moddyn, \dynfcn}_0$ to trajectories under the final filtered policy $\filter_{\filterpol_\maxalgiter}^{\pol_\maxalgiter}$, depicted in Figure \ref{fig:dataDistEnvs} in blue and green, respectively. Similarly, the bottom row compares the initial certain set $\accurset{0}$ in grey to the final set $\accurset{\maxalgiter}$  in pink.

For goal-reaching CartPole, we plot the subspace $x \times \theta$ that indicates a large disconnect between initial data and the final policy. In particular, the initial data and accurate set are far from the goal state. Consequently, $\accurset{}$ needed to expand substantially, to learn higher performing policies, which explains the gradual increase in control performance in Figure \ref{fig:resultsReturnFailureDynasaur}. Since \dynasaur{} accumulates at most one failure on goal-reaching CartPole, the gradual expansion of $\accurset{}$ yielding less restrictive $\filterpol$ appears to work as intended.

Due to the 17-dimensional state space, we perform a principal component analysis to compare the distributions in Walker. While a substantial difference between initial data and final policy distribution, as well as the initial and final certain set, can be observed, the effect is less pronounced than in CartPole. Combined with the strong but stagnant control performance in Figure \ref{fig:resultsReturnFailureDynasaur}, this suggests that efficiently expanding $\accurset{}$ in high-dimensional spaces is a limitation of the current approach.
\begin{figure}[tb]
    \centering
\includegraphics[width=0.9\linewidth]{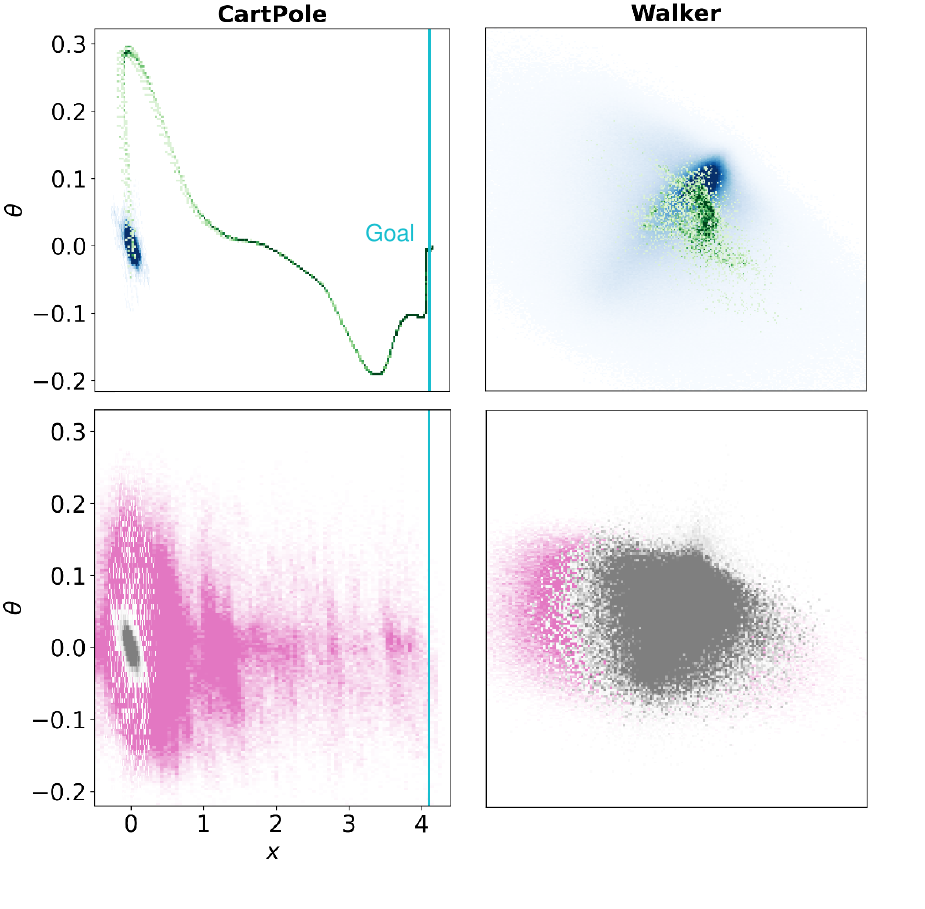}
    \caption{Exploration Throughout Training. \emph{Top row: Initial data $\buffer_0^{\moddyn, \dynfcn}$ (blue) vs. Data of the final filtered policy $\filter_{\filterpol_\maxalgiter}^{\pol_\maxalgiter}$ (green); Bottom row: Initial certain set $\accurset{0}$ (grey) vs. final certain set $\accurset{\maxalgiter}$ (pink). \dynasaur{} explores the environment beyond initial data in both tasks, while the effect appears more pronounced in CartPole.}}
    \label{fig:dataDistEnvs}
\end{figure}

\subsection{Ablation of Filter Policy Design Decisions}\label{subsec:ablation}

We present ablation results of the design decisions presented in Section \ref{sec:method} for goal-reaching CartPole in Figure \ref{fig:ablation_cartpole} with the corresponding results for Walker in Appendix~\ref{app:ablationWalker}.

First, the predominant importance of formulating the action filter space as presented in Section \ref{subsec:safe_act_formulation} is illustrated when compared to learning $\hyperplaneVector, \hyperplaneOffset$ directly as proposed in \cite{black-box-lavankul}, shown in pink. The method is incapable of learning a reliable safety filter that allows for high control returns. Second, Removing the regularization loss on $\asf$ from the objective \eqref{eq:rl_filter_learning_objective} introduced in Section \ref{subsec:safe_rew_formulation}, shown in purple, still yields a good filter but shows weaker control performance. This indicates a valid but restrictive projection into $\safepolset(\accurset{})$, as expected. Third, removing the heuristics for shaping $\initdistsf$ presented in Section \ref{subsec:safe_dist_formulation} and starting from $\initdist$, instead, yields the blue curve. In this case, we observe an increased amount of failures, indicating the importance of overrepresenting $\trajfailset$, and slightly weaker control performance as $\trajretset$ is not represented.

\subsection{Safety Filter Analysis}\label{subsec:safetyFilterAnalysis}
Finally, we observe the safety filter $\filter_{\filterpol_\maxalgiter}^{\pol_\maxalgiter}$ after the final training iteration $\maxalgiter$ for goal-reaching CartPole in Figure \ref{fig:finalfilter}. The top row shows the Cartpole at the beginning, in the middle, and at the end of a trajectory in the environment. The bottom row shows $\filterpol_\maxalgiter$ in the $x \times \theta$ subspace, with velocities given below each frame specifying the current state indicated by the black dot. Filtering the 1-dimensional action, namely applying a force in the interval $[-1, 1]$ to the cart, is achieved by reducing the interval. This is visualized via color-coding, where blue and orange indicate that only force to the right or left is considered viable, respectively, while white indicates that all actions are considered viable. We observe a clear correlation between the pole angle and the set of viable actions, which aligns with the task dynamics. Further, we see a relatively broad set of viable actions around the upright position at low velocities, which grows more restrictive as deflections and velocities increase.

\begin{figure}[tb]
    \centering
    \includegraphics[width=1\linewidth]{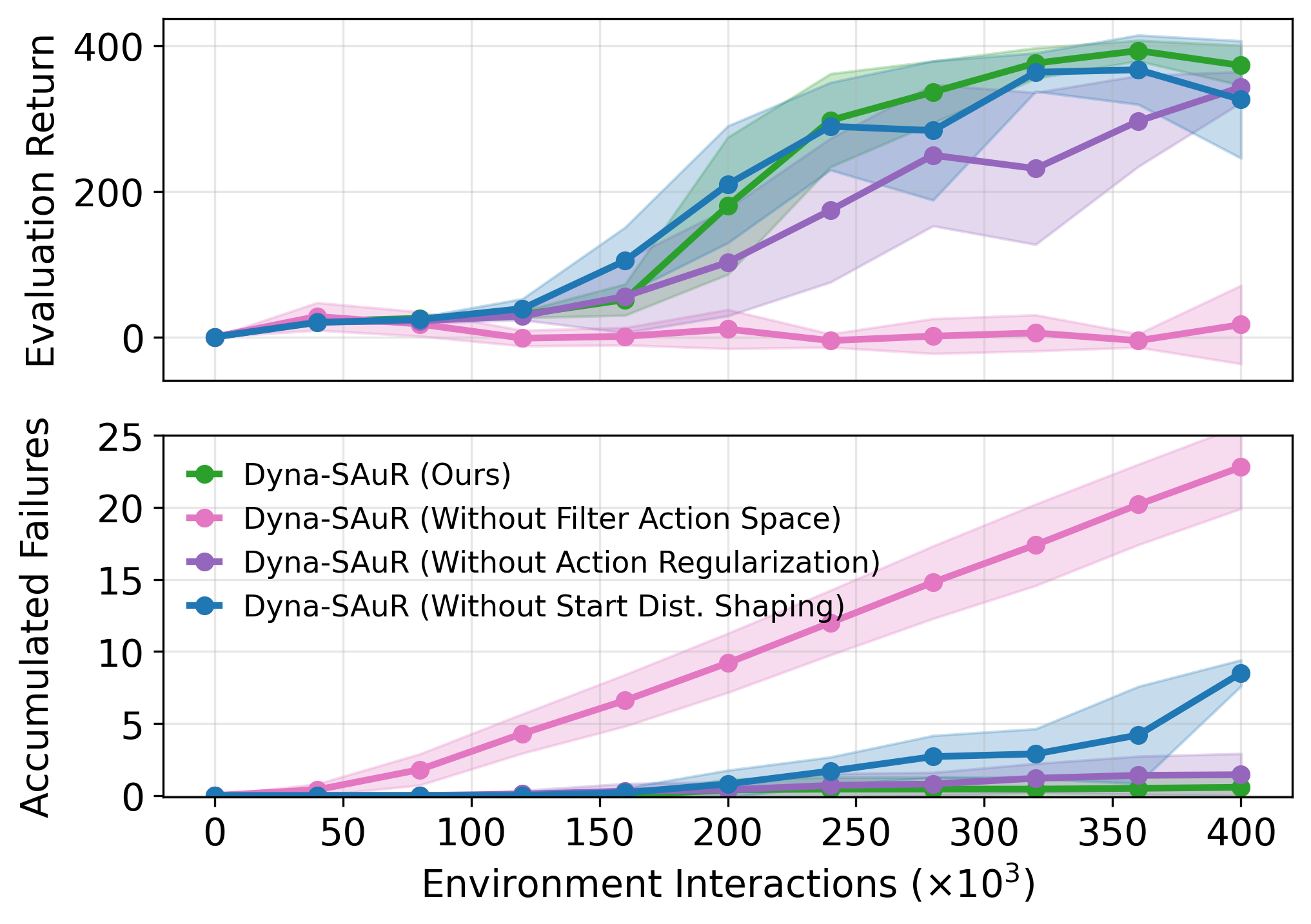}
    \caption{Ablation of \dynasaur{} Design Choices. \emph{
    Removing the action formulation of Section \ref{subsec:safe_act_formulation} substantially impedes performance and safety. Removing the regularization loss of Section~\ref{subsec:safe_rew_formulation} and start state distribution of Section~\ref{subsec:safe_dist_formulation}
    yields weaker results concerning performance and safety, respectively.}}
    \label{fig:ablation_cartpole}
\end{figure}
\begin{figure}[tb]
    \centering
    \includegraphics[width=1\linewidth]{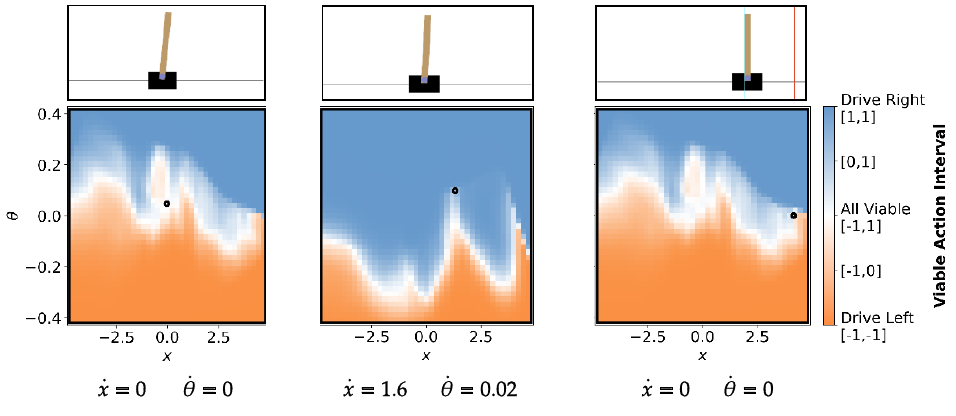}
    \caption{Final safety filter $\filter_{\filterpol_\maxalgiter}^{\pol_\maxalgiter}$ in goal-reaching CartPole. \emph{
     The filter is less restrictive around the upper equilibrium and for low velocities, indicated by the comparatively large white areas, and becomes more restrictive as deflections and velocities increase.}}
    \label{fig:finalfilter}
\end{figure}

\section{Related Work}\label{sec:related_work}
Besides penalizing constraint violations in the reward function \cite{massiani2022safe}, safety in RL is frequently addressed through constrained Markov decision processes (CMDPs) \cite{altmann_cmdp}, where the expected cumulative safety cost is constrained by a budget \cite{wabersich2023data}. While widely adopted \cite{achiam2017constrained, tessler2018reward, Ray2019SafeExploration, sauteRL, PrimalDualSafeRL, PrimalDualSafeRL3_FOCOPS, PrimalDualSafeRL4_P3O, actsafe_ICLR_2025_similarToIsol}, CMDPs address safety in expectation rather than state-wise.

State-wise constrained MDPs (SCMDPs) make stricter statements by formulating a cost budget per state \cite{zhao2023state}. Methods addressing state-wise safety of the deployed agent often require minimal prior knowledge \cite{AsymptoticStatewise2019, AsymptoticStatewise2020, AsymptoticStatewise2022}. Conversely, approaches addressing training-time safety typically rely on strong assumptions, such as known dynamics or prior safe controllers \cite{SafeRLModelGivenCBF4LyapunovNeeded,StateWiseSafetyDuringTraining2017,StateWiseSafetyDuringTraining2018,StateWiseSafetyDuringTraining2022}. Recovery RL \cite{Thananjeyan2020RecoveryRS} utilizes prior data to avoid failures but is restricted to stable systems.

Viability-based approaches yield rigorous safety filters providing hard state-wise safety guarantees, but often face computational limits or demand substantial domain knowledge \cite{wabersich2023data}. Methods based on Hamilton–Jacobi reachability, control barrier functions, or Lyapunov functions \cite{SafeRLModelGivenHJR, StateWiseSafetyDuringTraining2021, SafeRLModelGivenCBF1PartialNominalModel, SafeRLModelGivenCBF2Modelneeded, SafeRLModelGivenCBF3NeedsControlInvarSet} generally require nominal models or invariant sets. 

Data-driven approximations to viability-based safety filters reduce these requirements but face limitations, including the inability to handle discontinuous dynamics~\cite{Wang2022EnforcingHC} and the need for unfiltered environment interaction or accurate sampling models~\cite{learningCBF,HJRModelFree,black-box-lavankul}. To our knowledge, \dynasaur{} is the first scalable approach to approximate viability-based filters for diverse systems with minimal domain knowledge.

\section{Conclusion}

Safe exploration is a fundamental concern in RL, yet traditional safety filters require substantial domain knowledge and struggle with high-dimensional problems. Conversely, RL excels at learning control for high-dimensional systems that are hard to model from first principles.
We propose \dynasaur{}, a scalable, data-driven approach to filter synthesis that jointly learns control and filter policies using a learned uncertainty-aware dynamics model.
Inspired by viability theory, we present a compact RL formulation that approximates a rigorous safety filter matching the predictive capabilities of the learned dynamics model. Specifically, we formalize viability within a learned model, introduce a novel filter parametrization, and provide scaling guidelines. Building on the learned model, we limit required domain knowledge to a notion of failure and an initial dataset. In empirical evaluations, \dynasaur{} matches or exceeds state-of-the-art safe RL control performance while reducing failures by at least two orders of magnitude. While future work should focus on further improving exploration in high-dimensional spaces, we believe \dynasaur{} represents a significant step towards safer, more reliable RL agents.

\newpage

\section*{Acknowledgement}
We thank Lukas Kesper, Devdutt Subhasish, and Pierre-François Massiani for the valuable discussions on the work presented in this paper.
The authors gratefully acknowledge the computing time provided to them at the NHR Center NHR4CES at RWTH Aachen University (project number p0022301). This is funded by the Federal Ministry of Education and Research, and the state governments participating on the basis of the resolutions of the GWK for national high performance computing at universities (www.nhr-verein.de/unsere-partner). Friedrich Solowjow is supported by the KI-Starter grant by the state of NRW. This work is funded in part under the Excellence Strategy of the Federal Government and the Lander (G:(DE-82)EXS-SF-OPSF854). 

\section*{Impact Statement}

We present \dynasaur{}, an MBRL algorithm that jointly learns a control policy and a safety filter from an uncertainty-aware model. Addressing exploration safety, \dynasaur{} contributes to the general applicability of RL methods. There are many potential societal consequences of our work, none of which we feel must be specifically highlighted here.

\bibliography{main}
\bibliographystyle{icml2026}

\newpage
\appendix
\onecolumn
\section{Detailed Algorithm Description of \dynasaur{}}
\label{app:pseudocode}

In this section, we provide a more detailed overview of how the three learning problems depicted in \Figref{fig:detailed_block_diagram} interact with each other. We begin with model learning in~\Secref{app:modelLearning}, then continue with the  filter learning problem in~\Secref{app:safety_filter_learning_problem}, and conclude with the (safety-filtered) control learning problem in~\Secref{app:perfTraining}. Algorithm~\ref{alg:dynasaur_app} connects all learning problems into the \dynasaur{ }algorithm.

\begin{algorithm}
\caption{\dynasaur}
\label{alg:dynasaur_app}
\scriptsize
\begin{algorithmic}
\INPUT $\buffer^{\moddyn, \dynfcn}_0$, randomly initialized $\moddyn_0$, $\pol_0$ and $\filterpol_0$
\FOR{$\algiter \in \{1, \dots, \maxalgiter \}$ \dynasaur~iterations}
\STATE Train dynamics model $\moddyn_{\algiter}$ using data from $\buffer^{\moddyn, \dynfcn}_{\algiter-1}$ (Algorithm \ref{alg:model_learning})
\WHILE{Filter evaluation not successful}
\STATE Train filter policy $\filterpol_{\algiter}$ using $\moddyn_{\algiter}$ and $\pol_{\algiter-1}$ (Algorithm \ref{alg:filter_learning})
\STATE Evaluate filter policy $\filterpol_{\algiter}$  (Algorithm \ref{alg:filter_eval})
\ENDWHILE
\STATE Train control policy $\pol_\algiter$ using $\moddyn_{\algiter}$ and $\filterpol_{\algiter}$ and collect environment data $\buffer^{\moddyn, \dynfcn}_{\algiter}$ (Algorithm \ref{alg:performance_learning})
\ENDFOR
\OUTPUT $\moddyn_{\maxalgiter}, \filterpol_{\maxalgiter}, \pol_{\maxalgiter}$
\end{algorithmic}
\end{algorithm}
\begin{figure}
    \centering
    \tikzset{every picture/.style={line width=0.75pt}} %
\resizebox{\linewidth}{!}{
\begin{tikzpicture}[x=0.75pt,y=0.75pt,yscale=-1,xscale=1]

\draw  [color={rgb, 255:red, 214; green, 39; blue, 40 }  ,draw opacity=1 ][fill={rgb, 255:red, 214; green, 39; blue, 40 }  ,fill opacity=0.2 ] (240,19.5) -- (240,58.5) .. controls (240,67.06) and (190.75,74) .. (130,74) .. controls (69.25,74) and (20,67.06) .. (20,58.5) -- (20,19.5) .. controls (20,10.94) and (69.25,4) .. (130,4) .. controls (190.75,4) and (240,10.94) .. (240,19.5) .. controls (240,28.06) and (190.75,35) .. (130,35) .. controls (69.25,35) and (20,28.06) .. (20,19.5) ;
\draw  [color={rgb, 255:red, 214; green, 39; blue, 40 }  ,draw opacity=1 ][fill={rgb, 255:red, 214; green, 39; blue, 40 }  ,fill opacity=0.2 ] (470,19.5) -- (470,58.5) .. controls (470,67.06) and (420.75,74) .. (360,74) .. controls (299.25,74) and (250,67.06) .. (250,58.5) -- (250,19.5) .. controls (250,10.94) and (299.25,4) .. (360,4) .. controls (420.75,4) and (470,10.94) .. (470,19.5) .. controls (470,28.06) and (420.75,35) .. (360,35) .. controls (299.25,35) and (250,28.06) .. (250,19.5) ;
\draw  [color={rgb, 255:red, 214; green, 39; blue, 40 }  ,draw opacity=1 ][fill={rgb, 255:red, 214; green, 39; blue, 40 }  ,fill opacity=0.2 ] (70,80) -- (160,80) -- (160,120) -- (70,120) -- cycle ;
\draw  [color={rgb, 255:red, 44; green, 160; blue, 44 }  ,draw opacity=1 ][fill={rgb, 255:red, 44; green, 160; blue, 44 }  ,fill opacity=0.2 ] (70,230) -- (160,230) -- (160,270) -- (70,270) -- cycle ;
\draw  [color={rgb, 255:red, 44; green, 160; blue, 44 }  ,draw opacity=1 ][fill={rgb, 255:red, 44; green, 160; blue, 44 }  ,fill opacity=0.2 ] (320,245.5) -- (320,284.5) .. controls (320,293.06) and (286.42,300) .. (245,300) .. controls (203.58,300) and (170,293.06) .. (170,284.5) -- (170,245.5) .. controls (170,236.94) and (203.58,230) .. (245,230) .. controls (286.42,230) and (320,236.94) .. (320,245.5) .. controls (320,254.06) and (286.42,261) .. (245,261) .. controls (203.58,261) and (170,254.06) .. (170,245.5) ;
\draw  [color={rgb, 255:red, 227; green, 119; blue, 194 }  ,draw opacity=1 ][fill={rgb, 255:red, 227; green, 119; blue, 194 }  ,fill opacity=0.2 ] (480,225.5) -- (480,264.5) .. controls (480,273.06) and (450.9,280) .. (415,280) .. controls (379.1,280) and (350,273.06) .. (350,264.5) -- (350,225.5) .. controls (350,216.94) and (379.1,210) .. (415,210) .. controls (450.9,210) and (480,216.94) .. (480,225.5) .. controls (480,234.06) and (450.9,241) .. (415,241) .. controls (379.1,241) and (350,234.06) .. (350,225.5) ;
\draw   (518.25,110) -- (628.25,110) -- (628.25,150) -- (518.25,150) -- cycle ;
\draw   (520,160) -- (630,160) -- (630,200) -- (520,200) -- cycle ;
\draw  [color={rgb, 255:red, 227; green, 119; blue, 194 }  ,draw opacity=1 ][fill={rgb, 255:red, 227; green, 119; blue, 194 }  ,fill opacity=0.2 ] (360,150) -- (470,150) -- (470,190) -- (360,190) -- cycle ;
\draw   (360,100) -- (470,100) -- (470,140) -- (360,140) -- cycle ;
\draw   (200,180) -- (250,180) -- (250,220) -- (200,220) -- cycle ;
\draw   (200,130) -- (310,130) -- (310,170) -- (200,170) -- cycle ;
\draw  [color={rgb, 255:red, 44; green, 160; blue, 44 }  ,draw opacity=1 ][fill={rgb, 255:red, 44; green, 160; blue, 44 }  ,fill opacity=0.2 ] (80,180) -- (150,180) -- (150,220) -- (80,220) -- cycle ;
\draw  [color={rgb, 255:red, 214; green, 39; blue, 40 }  ,draw opacity=1 ][fill={rgb, 255:red, 214; green, 39; blue, 40 }  ,fill opacity=0.2 ] (80,130) -- (150,130) -- (150,170) -- (80,170) -- cycle ;
\draw    (630,180) -- (650,180) -- (650,360) -- (40,360) -- (40,200) -- (78,200) ;
\draw [shift={(80,200)}, rotate = 180] [fill={rgb, 255:red, 0; green, 0; blue, 0 }  ][line width=0.08]  [draw opacity=0] (12,-3) -- (0,0) -- (12,3) -- cycle    ;
\draw  [dash pattern={on 4.5pt off 4.5pt}]  (458.5,344.9) -- (458.5,357.7) -- (458.5,377.9) ;
\draw    (630,130) -- (650,130) -- (650,360) -- (40,360) -- (40,150) -- (78,150) ;
\draw [shift={(80,150)}, rotate = 180] [fill={rgb, 255:red, 0; green, 0; blue, 0 }  ][line width=0.08]  [draw opacity=0] (12,-3) -- (0,0) -- (12,3) -- cycle    ;
\draw    (150,200) -- (198,200) ;
\draw [shift={(200,200)}, rotate = 180] [fill={rgb, 255:red, 0; green, 0; blue, 0 }  ][line width=0.08]  [draw opacity=0] (12,-3) -- (0,0) -- (12,3) -- cycle    ;
\draw    (150,150) -- (198,150) ;
\draw [shift={(200,150)}, rotate = 180] [fill={rgb, 255:red, 0; green, 0; blue, 0 }  ][line width=0.08]  [draw opacity=0] (12,-3) -- (0,0) -- (12,3) -- cycle    ;
\draw    (250,200) -- (280,200) -- (280,172) ;
\draw [shift={(280,170)}, rotate = 90] [fill={rgb, 255:red, 0; green, 0; blue, 0 }  ][line width=0.08]  [draw opacity=0] (12,-3) -- (0,0) -- (12,3) -- cycle    ;
\draw    (310,150) -- (330,150) ;
\draw [shift={(330,150)}, rotate = 0] [color={rgb, 255:red, 0; green, 0; blue, 0 }  ][fill={rgb, 255:red, 0; green, 0; blue, 0 }  ][line width=0.75]      (0, 0) circle [x radius= 3.35, y radius= 3.35]   ;
\draw    (360,120) -- (342.35,120) ;
\draw [shift={(340,120)}, rotate = 180] [color={rgb, 255:red, 0; green, 0; blue, 0 }  ][line width=0.75]      (0, 0) circle [x radius= 3.35, y radius= 3.35]   ;
\draw    (360,170) -- (342.35,170) ;
\draw [shift={(340,170)}, rotate = 180] [color={rgb, 255:red, 0; green, 0; blue, 0 }  ][line width=0.75]      (0, 0) circle [x radius= 3.35, y radius= 3.35]   ;
\draw    (330,150) -- (338,123) ;
\draw    (520,140) -- (502.35,140) ;
\draw [shift={(500,140)}, rotate = 180] [color={rgb, 255:red, 0; green, 0; blue, 0 }  ][line width=0.75]      (0, 0) circle [x radius= 3.35, y radius= 3.35]   ;
\draw    (520,180) -- (502.35,180) ;
\draw [shift={(500,180)}, rotate = 180] [color={rgb, 255:red, 0; green, 0; blue, 0 }  ][line width=0.75]      (0, 0) circle [x radius= 3.35, y radius= 3.35]   ;
\draw    (470,170) -- (490,170) ;
\draw [shift={(490,170)}, rotate = 0] [color={rgb, 255:red, 0; green, 0; blue, 0 }  ][fill={rgb, 255:red, 0; green, 0; blue, 0 }  ][line width=0.75]      (0, 0) circle [x radius= 3.35, y radius= 3.35]   ;
\draw    (497,178) -- (490,170) ;
\draw    (470,120) -- (518,120) ;
\draw [shift={(520,120)}, rotate = 180] [fill={rgb, 255:red, 0; green, 0; blue, 0 }  ][line width=0.08]  [draw opacity=0] (12,-3) -- (0,0) -- (12,3) -- cycle    ;
\draw    (630,180) -- (650,180) ;
\draw [shift={(650,180)}, rotate = 0] [color={rgb, 255:red, 0; green, 0; blue, 0 }  ][fill={rgb, 255:red, 0; green, 0; blue, 0 }  ][line width=0.75]      (0, 0) circle [x radius= 3.35, y radius= 3.35]   ;
\draw    (80,200) -- (40,200) ;
\draw [shift={(40,200)}, rotate = 180] [color={rgb, 255:red, 0; green, 0; blue, 0 }  ][fill={rgb, 255:red, 0; green, 0; blue, 0 }  ][line width=0.75]      (0, 0) circle [x radius= 3.35, y radius= 3.35]   ;

\draw (130.5,48.5) node   [align=left] {$\displaystyle \left\{s_{t} ,a_{t} ,s_{t+1} ,r_{t+1} -\| a_{t}^{V} -a_{t} \| _{2}\right\}$};
\draw (128.7,19.8) node   [align=left] {$\displaystyle \mathcal{D}^{\pi ,p}$};
\draw (360.5,48.5) node   [align=left] {$\displaystyle \left\{\hat{s}_{t} ,a_{t} ,\hat{s}_{t+1} ,r_{t+1} -\| a_{t}^{V} -a_{t} \| _{2}\right\}$};
\draw (359.7,19.8) node   [align=left] {$\displaystyle \mathcal{D}^{\pi ,\hat{p}}$};
\draw (115,100) node   [align=left] {$\displaystyle Q( s_{t} ,a_{t})$};
\draw (115,250) node   [align=left] {$\displaystyle Q^{SF}( s_{t} ,u_{t})$};
\draw (243.45,245.3) node   [align=left] {$\displaystyle \mathcal{D}^{\mu ,\hat{p}}$};
\draw (244,279) node   [align=left] {$\displaystyle \left\{\hat{s}_{t} ,u_{t} ,\hat{s}_{t+1} ,r_{t+1}^{SF}\right\}$};
\draw (416.5,255) node   [align=left] {$\displaystyle \left\{s_{t} ,a_{t}^{V} ,s_{t+1} ,r_{t+1}\right\}$};
\draw (416,224.5) node   [align=left] {$\displaystyle \mathcal{D}^{\hat{p} ,p}$};
\draw (573.25,130) node   [align=left] {$\displaystyle r( s_{t} ,a_{t} ,s_{t+1})$};
\draw (575,180) node   [align=left] {$\displaystyle r^{SF}( s_{t} ,u_{t} ,s_{t+1})$};
\draw (415,170) node   [align=left] {$\displaystyle \hat{p}( \cdotp \mid s_{t} ,a_{t})$};
\draw (415,120) node   [align=left] {$\displaystyle p( \cdotp \mid s_{t} ,a_{t})$};
\draw (224.85,200.19) node   [align=left] {$\displaystyle h( u_{t})$};
\draw (255.17,150.19) node   [align=left] {$\displaystyle \kappa ( a_{t} \mid w_{t} ,b_{t})$};
\draw (115,200) node   [align=left] {$\displaystyle \mu ( s_{t})$};
\draw (115,150) node   [align=left] {$\displaystyle \pi ( s_{t})$};
\draw (509.48,345) node   [align=left] {$\displaystyle s_{t+1} ,r_{t+1}$};
\draw (425.5,345) node   [align=left] {$\displaystyle s_{t,} r_{t}$};
\draw (59,186) node   [align=left] {$\displaystyle s_{t}$};
\draw (58,135.25) node   [align=left] {$\displaystyle s_{t}$};
\draw (171.5,135.25) node   [align=left] {$\displaystyle a_{t}$};
\draw (171.5,186) node   [align=left] {$\displaystyle u_{t}$};
\draw (309.5,186) node   [align=left] {$\displaystyle w_{t} ,b_{t}$};
\draw (492,107) node   [align=left] {$\displaystyle s_{t+1}$};
\draw (487,186) node   [align=left] {$\displaystyle \hat{s}_{t+1}$};
\draw (322.5,135.25) node   [align=left] {$\displaystyle a_{t}^{V}$};
\draw (416.5,294.5) node  [color={rgb, 255:red, 227; green, 119; blue, 194 }  ,opacity=1 ] [align=left] {Model Learning};
\draw (243.5,315.5) node  [color={rgb, 255:red, 227; green, 119; blue, 194 }  ,opacity=1 ] [align=left] {\textcolor[rgb]{0.17,0.63,0.17}{Safety Filter Learning}};
\draw (551.5,40.5) node  [color={rgb, 255:red, 227; green, 119; blue, 194 }  ,opacity=1 ] [align=left] {\textcolor[rgb]{0.84,0.15,0.16}{Control Learning}};

\end{tikzpicture}
}
    \caption{Overview of the three learning problems in \dynasaur{} and their interactions.
    \emph{During filter learning, the control policy $\pi$ is fixed. The filter replay buffer $\mathcal{D}^{\mu,\hat{p}}$ is populated using model rollouts ${\hat{s}_t, u_t, \hat{s}_{t+1}, r^{\mathrm{SF}}_{t+1}}$, where the filter policy $\mu$ modifies potentially unsafe actions $a_t$ into safe actions $a_t^V$ while exploring the action space. During control learning, the filter $\mu$ is fixed. The control policy $\pi$ maintains two replay buffers: an environment control buffer $\mathcal{D}^{\pi,p}$ and a model control buffer $\mathcal{D}^{\pi,\hat{p}}$. Model rollouts, in which actions proposed by $\pi$ are filtered by $\mu$, are used to populate $\mathcal{D}^{\pi,\hat{p}}$. In parallel, the control policy interacts with the environment, with its actions filtered by $\mu$, to populate $\mathcal{D}^{\pi,p}$. In both cases, the environment reward is penalized by the magnitude of filtering, $\lVert a_t^V - a_t \rVert_2$. The unfiltered action $a_t$ is stored in both control buffers. Transitions from the environment control buffer are additionally used to populate the model learning buffer $\mathcal{D}^{\hat{p},p}$, without reward penalization and using the filtered action $a_t^V$, as this is the action applied to the true dynamics $p$.}}
    \label{fig:detailed_block_diagram} 
\end{figure}
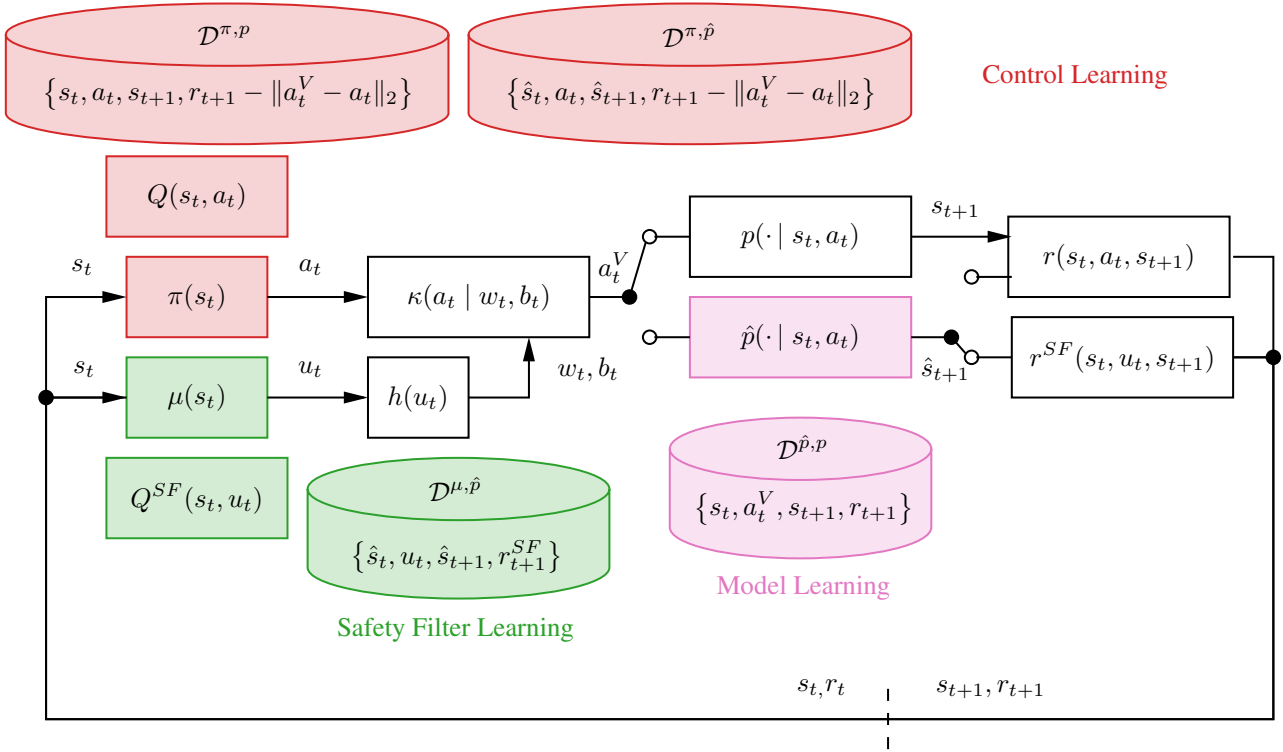

\subsection{Probabilistic Ensemble Model Learning}
\label{app:modelLearning}

\begin{algorithm}
\caption{Dynamics Model Learning}
\label{alg:model_learning}
\scriptsize
\begin{algorithmic}
\INPUT $\buffer^{\moddyn, \dynfcn}_\algiter, \moddyn_\modparams$
\FOR{model training steps}
\STATE Sample mini-batch of $N^{\mathrm{Model}}$ transitions from $\buffer^{\moddyn, \dynfcn}$
\STATE Sample ensemble member $\moddyn_{\modparams_e}$ with $e \sim \unidist(\{1, \dots, E \})$
\STATE $\modparams_e \leftarrow \arg\min_{\modparams_e} - \frac{1}{N} \sum \log \moddyn_{\modparams_e}(\state_{t+1}|\state_t, \viaact_t)$
\ENDFOR
\OUTPUT $\moddyn_\modparams$
\end{algorithmic}
\end{algorithm}

In the beginning of the \dynasaur{} algorithm ($j=0$) and after each control learning phase ($j$), we use the model learning buffer $\mathcal{D}^{\hat{p},p}_{j-1}$ to train a Probabilistic Ensemble Model~\cite{Lakshminarayanan2017Dec} by minimizing the negative log-likelihood, as described in \Algref{alg:model_learning}. Probabilistic Ensemble Models separate aleatoric and epistemic uncertainty. Aleatoric uncertainty arises from the inherent stochasticity of the system, while epistemic uncertainty is caused by lack of data.

Ensemble disagreement is used to approximate epistemic uncertainty and serves as an indicator of prediction inaccuracy. Recent work on probabilistic ensemble models~\cite{Frauenknecht2025Infoprop} restricts model rollouts to a sufficiently certain set $\accurset{j} \subseteq (\statesp \times \actsp)$ in which the environment dynamics are modeled accurately. This set is defined as
\begin{equation}
\accurset{j} := \{(\state_t, \act_t) \in \statesp \times \actsp \mid \entropy(\modrvstate_{t+1}) \leq \singleStepThreshold\},
\label{eq:accur_set_app}
\end{equation}
where the entropy of the model’s predictive distribution, $\entropy(\modrvstate_{t+1})$, which indicates uncertainty, is upper bounded by $\singleStepThreshold$. The data used to train the model of iteration $j$ is $\mathcal{D}^{\hat{p},p}_{j-1}$ and corresponds to $\accurset{j-1}$.

In addition, the accumulated information loss is thresholded for each trajectory $\hat{\tau}$
\begin{equation}
\hat{\tau}=(\modstate_0,\act_0, \modstate_1, \ldots, \act_{t-1}, \modstate_T)
\end{equation}
by $\multiStepThreshold$. This defines sufficiently accurate trajectories of length $t' \in {1, \ldots, T}$ as
\begin{equation}\label{eq:accumThreshold}
\left\{ (\modstate_t, \act_t)_{t=0}^{t'} \in \accurset{}^{t'} \midd \sum_{t=0}^{t'} \informationLoss(\modrvstate_{t+1}) \leq \multiStepThreshold \right\},
\end{equation}
which we use throughout the paper when referring to model rollouts. The authors of~\cite{Frauenknecht2025Infoprop} provide heuristics for determining suitable values of $\singleStepThreshold$ and $\multiStepThreshold$ based on the training data of the current model.

We proceed with filter learning in the current \dynasaur{} iteration.

\subsection{Filter Learning Problem}\label{app:safety_filter_learning_problem}
\subsubsection{Filter Policy Learning}
\begin{algorithm}
\caption{Filter Policy Learning}
\label{alg:filter_learning}
\scriptsize
\begin{algorithmic}
\INPUT $\moddyn_{\algiter}, \pol_{\algiter -1}, \filterpol_{\filterpolparams}, \filterpol_{\bar{\filterpolparams}}, \filterqfcn_{\filterqfcnparams_1}, \filterqfcn_{\bar{\filterqfcnparams}_1}, \filterqfcn_{\filterqfcnparams_2}, \filterqfcn_{\bar{\filterqfcnparams}_2}, \buffer^{\filterpol, \moddyn} = \emptyset$ 
\STATE Randomly initialize $\filterpolparams, \filterqfcnparams_1, \filterqfcnparams_2$ and $\bar{\filterpolparams} \leftarrow \filterpolparams, \bar{\filterqfcnparams}_1 \leftarrow \filterqfcnparams_1, \bar{\filterqfcnparams}_2 \leftarrow \filterqfcnparams_2$
    \FOR{$\safetyFilterModelBasedRollouts$ model-based rollouts}
    \STATE $\expscale_1 \sim \unidist(\expscale_{\mathrm{min}}, \expscale_{\mathrm{max}})$ \ALGCOMMENT{\scriptsize{Sample exploration scale \cite{fasttd3}}}
    \STATE $\behpol(\state_t)$ is either $\pol_{\algiter -1}(\state_t) + \expscale_2 \cdot \noise^{\behpol}_t$ or $\expscale_3 \cdot \noise^{\behpol}_t$ with $\noise^{\behpol}_t \sim \pinknoiseproc^{\behpol}$ \ALGCOMMENT{\scriptsize{Choose between stochastic performance policy or pink noise process as behavior policy}}
    \STATE $\modstate_0 \sim \initdistsf$ \ALGCOMMENT{\scriptsize{Sample initial state from modified starting state distribution introduced in \Secref{subsec:safe_dist_formulation}}}
        \WHILE{$t < T$ and $\entropy(\rvmodstate_t) < \enttresh_1$ and $\sum_{t^{\prime} = 0}^t\entropy(\rvmodstate_{t^{\prime}}) < \enttresh_2 $}
        \STATE $\act_t \sim \behpol(\modstate_t)$ \ALGCOMMENT{\scriptsize{Sample potentially unsafe action from behavior policy}}
        \STATE $\asf_t \sim \filterpol_{\filterpolparams}(\modstate_t) + \expscale_1 \cdot \noise_t^{\filterpol} $ with $\noise_t^{\filterpol} \sim \pinknoiseproc^{\filterpol}$ \ALGCOMMENT{\scriptsize{Sample filter action exploring with pink noise}}
        \STATE $\viaact_t = \filter(\act_t, \hyperplaneVector_t, \hyperplaneOffset_t)$ with $(\hyperplaneVector_t, \hyperplaneOffset_t) = \hyperplaneTransformationFunction(\asf_t)$ \ALGCOMMENT{\scriptsize{Potentially filter behavior policy according to~\eqref{eq:learned_filter}}}
        \STATE $\modstate_{t+1} \sim \moddyn(\cdot | \modstate_t, \viaact_t)$ \ALGCOMMENT{\scriptsize{Sample next state via the Infoprop rollout~\cite{Frauenknecht2025Infoprop}}}
        \STATE $\rewsf_{t+1} = \rewfcnsf(\modstate_t, \asf_t, \modstate_{t+1})$ \ALGCOMMENT{\scriptsize{Get next reward according to \eqref{eq:reward_func_safe}}}
        \STATE $\buffer^{\filterpol, \moddyn} \leftarrow \buffer^{\filterpol, \moddyn} \cup \{ \modstate_t, \asf_t, \modstate_{t+1}, \rewsf_{t+1}\}$ \ALGCOMMENT{\scriptsize{Store transition in filter replay buffer}}
        \ENDWHILE
    \ENDFOR
    \FOR{update steps}
    \STATE Sample mini-batch of $N$ transitions from $\buffer^{\filterpol, \moddyn}$
    \STATE $y = \sum_{k = t}^{t+n-1} \discount^{k-t} \rewsf_{k+1} + \discount^{n} \min_{i \in \{1, 2 \}}\filterqfcn_{\bar{\filterqfcnparams}_i}(\modstate_{t+n}, \filterpol_{\bar{\filterpolparams}}(\modstate_{t+n}) + \expscale_4 \cdot \noise)$ with $\noise \sim \mathcal{N}(0, I)$  \ALGCOMMENT{\scriptsize{Compute the $n$-step TD target $y$}}
    \STATE $\filterqfcnparams_i \leftarrow \arg\min_{\filterqfcnparams_i} \frac{1}{N}\sum(y-\filterqfcn_{\filterqfcnparams_i}(\modstate_t, \asf_t))^2$ \ALGCOMMENT{\scriptsize{Update the Q-functions by minimizing the squared TD error}}
        \IF{policy update frequency}
        \STATE $\filterpolparams \leftarrow \arg\max_{\filterpolparams} \frac{1}{N}\sum (\filterqfcn_{\filterqfcnparams_1}(\modstate_t, \filterpol_{\filterpolparams}(\modstate_t) - c \|\filterpol_{\filterpolparams}(\modstate_t)\|_2)$ \ALGCOMMENT{\scriptsize{Update policy by maximizing Q-value while penalizing excessive filtering \eqref{eq:rl_filter_learning_objective}}}
        \STATE $\bar{\filterqfcnparams}_i \leftarrow \polyak \filterqfcnparams_i + (1-\polyak) \bar{\filterqfcnparams}_i$ \ALGCOMMENT{\scriptsize{Polyak averaging}}
        \STATE $\bar{\filterpolparams} \leftarrow \polyak \filterpolparams + (1-\polyak) \bar{\filterpolparams}$ \ALGCOMMENT{\scriptsize{Polyak averaging}}
        \ENDIF
    \ENDFOR
\OUTPUT $\filterpol_{\filterpolparams}$
\end{algorithmic}
\end{algorithm}

To learn the filter $\filterpol_{j}$ in iteration $j$ of \dynasaur{}, we require the current model $\hat{p}_j$ and the control policy $\pi_{j-1}$ from the previous iteration. The learning procedure for the filter is described in \Algref{alg:filter_learning}. We adapt the recent model-free RL algorithm FastTD3~\cite{fasttd3}, which has demonstrated strong performance by fully exploiting parallelization in TD3. FastTD3 employs massively parallelized environments, large batch sizes, n-step returns, multiple exploration noise scales, and a distributional critic.

Since the distributional critic improves sample efficiency at the cost of increased wall-clock runtime, we adopt all components of FastTD3 except the distributional critic. This design choice is motivated by the fact that sample complexity is not critical in our setting, as we rely on parallel rollouts from copies of the learned model rather than parallel simulation of the real environment.

Each filter learning phase is initialized from scratch, using newly initialized Q-functions $\filterqfcn_{\filterqfcnparams_1}$ and $\filterqfcn_{\filterqfcnparams_2}$, as well as a newly initialized policy $\filterpol_{\filterpolparams}$, together with their corresponding target networks $\filterqfcn_{\bar{\filterqfcnparams}1}$, $\filterqfcn_{\bar{\filterqfcnparams}2}$, and $\filterpol_{\bar{\filterpolparams}}$. The filter replay buffer $\mathcal{D}^{\mu,\hat{p}}$ is also reset at the beginning of each learning phase.

In parallel, we perform $\safetyFilterModelBasedRollouts$ parallel model rollouts. For each rollout, a noise scale is sampled uniformly as $\expscale_1 \sim \unidist(\expscale_{\mathrm{min}}, \expscale_{\mathrm{max}})$~\cite{fasttd3}. For each rollout, the behavior policy is chosen with equal probability to be either a pink-noise process or the control policy augmented with pink-noise exploration. Rollouts are terminated if $\modstate_{t+1} \notin \safesetSFi{j-1}$. Rollouts are truncated if the maximum episode length is reached or if the accumulated information loss is over the threshold as in \eqref{eq:accumThreshold}.

Following the diagram in \Figref{fig:detailed_block_diagram} and Algorithm~\Algref{alg:filter_learning}, in each model rollout a potentially unsafe action $a_t$ proposed by the behavior policy is sampled to evaluate the exploratory hyperplane action $u_t$. This action is filtered by solving the quadratic program defined in~\eqref{eq:learned_filter}. The resulting filtered action is then applied to the model via an Infoprop rollout~\cite{Frauenknecht2025Infoprop} to sample the next state $\modstate_{t+1}$.

The reward function provides a positive signal when the 
filter agent keeps the state within the intersection of the safe and certain sets, and otherwise penalizes the agent according to~\eqref{eq:reward_func_safe}. The resulting transition is stored in the filter replay buffer. After each step across all model rollouts, the Q-functions and policy are trained using FastTD3. A large mini-batch is sampled to compute the $n$-step TD target $y$, which is then used to update the Q-functions by minimizing the squared TD error. Delayed policy updates are performed, where we introduce the action regularization term in~\eqref{eq:rl_filter_learning_objective} to penalize excessive filtering. As in FastTD3, Polyak averaging is used for updating the target networks.

After a fixed number of steps we start the evaluation of the current filter $\mu_j$.

\subsubsection{Filter Evaluation}
\begin{algorithm}
\caption{Filter Evaluation}
\label{alg:filter_eval}
\scriptsize
\begin{algorithmic}
\INPUT $\filterpol_{\algiter}$
    \FOR{number of evaluations}
    \STATE $\behpol(\state_t)$ is either $\pol_{\algiter -1}(\state_t) + \expscale_2 \cdot \noise^{\behpol}_t$ or $\expscale_3 \cdot \noise^{\behpol}_t$ with $\noise^{\behpol}_t \sim \pinknoiseproc^{\behpol}$ \ALGCOMMENT{\scriptsize{Choose between stochastic performance policy or pink noise process as behavior policy}}
    \STATE $\modstate_0 \sim \initdistsfeval(\cdot)$ \ALGCOMMENT{\scriptsize{Sample initial state}}
        \WHILE{$t < T$ and $\entropy(\rvmodstate_t) < \enttresh_1$ and $\sum_{t^{\prime} = 0}^t\entropy(\rvmodstate_{t^{\prime}}) < \enttresh_2 $}
        \STATE $\act_t \sim \behpol(\modstate_t)$ \ALGCOMMENT{\scriptsize{Sample potentially unsafe action from behavior policy}}
        \STATE $\asf_t = \filterpol_{\algiter}(\modstate)$ \ALGCOMMENT{\scriptsize{Sample deterministic filter action}}
        \STATE $\viaact_t = \filter(\act_t, \hyperplaneVector_t, \hyperplaneOffset_t)$ with $(\hyperplaneVector_t, \hyperplaneOffset_t) = \hyperplaneTransformationFunction(\asf_t)$ \ALGCOMMENT{\scriptsize{Potentially filter behavior policy according to~\eqref{eq:learned_filter}}}
        \STATE $\modstate_{t+1} \sim \moddyn(\cdot | \modstate_t, \viaact_t)$ \ALGCOMMENT{\scriptsize{Sample next state via the Infoprop rollout~\cite{Frauenknecht2025Infoprop}}}
        \ENDWHILE
    \ENDFOR
\OUTPUT $\trajfailset, \trajretset$ \ALGCOMMENT{\scriptsize{Update starting states}}
\end{algorithmic}
\end{algorithm}

The goal of the filter evaluation is to determine if the RL filter training is finished. To this end, we perform model rollouts using the greedy filter policy $\mu$ and evaluate how long the behavior policy, defined as in the previous section, can be kept safe. Each evaluation model rollout is initialized from a starting state sampled from $\initdistsfeval(\cdot)$, the evaluation starting-state distribution, which is defined as a mixture of the control starting-state distribution $\initdist$ and states from $\trajretset$.

We compute the average episode length across evaluation rollouts and terminate filter training if this metric does not improve compared to previous evaluations. We adopt this heuristic to make multiple RL filter trainings feasible within a reasonable time frame.

In addition to deciding when to end filter training and proceed to control learning, we update the filter starting-state $\initdistsf$ distribution after each evaluation. Parts of each terminated trajectory are added to the failure set $\trajfailset$, while parts of each truncated trajectory will be added to $\trajretset$.
We utilize the concept of maximum time to failure, which describes the maximum number of steps for which a state in the unviability kernel can be kept safe before it inevitably becomes unsafe. For most systems, this quantity is infeasible to compute exactly. We therefore design a heuristic that approximates the maximum time to failure using an interval defined by two hyperparameters, $[\mathrm{MTF}_{\mathrm{early}}, \mathrm{MTF}_{\mathrm{late}}]$. This interval is intended to capture the range in which the true maximum time to failure is expected to lie.

Given a terminated trajectory $\trajfail$ with terminal time $T$, we sample $\mathrm{MTF}_{\mathrm{num}}$ states from
\begin{equation}
(\modstate_{T-\mathrm{MTF}_{\mathrm{early}}}, \dots, \modstate_{T-\mathrm{MTF}_{\mathrm{late}}})
\end{equation}
and add them to $\trajfailset$ for each failed rollout. The motivation behind this heuristic is to sample training starting states in the vicinity of failure regions of the  filter that can still be corrected. By sampling in this manner, we heuristically select states from the viability kernel.

From the truncated trajectories $\trajret$, we aim to overrepresent those that yield high returns with respect to the control reward $r$, as an overly conservative safety  would be particularly harmful in these regions for the subsequent control policy learning problem. At the same time, to ensure that selected starting states are viable, we apply a heuristic that only selects high-return trajectories whose Q-values, used here as proxies for the expected time to failure, exceed a predefined threshold.

After a variable number of training and evaluation rounds, the method proceeds to control policy learning.

\subsection{Control Learning Problem}\label{app:perfTraining}
\begin{algorithm}
\caption{Control Policy Learning and Environment Data Collection}
\label{alg:performance_learning}
\scriptsize
\begin{algorithmic}
\INPUT $\dynfcn, \moddyn_{\algiter}, \filterpol_{\algiter}, \pol_{\polparams}, \pol_{\bar{\polparams}}, \qfcn_{\qfcnparams_1}, \qfcn_{\bar{\qfcnparams}_1}, \qfcn_{\qfcnparams_2}, \qfcn_{\bar{\qfcnparams}_2}, \buffer^{\pol, \dynfcn} = \emptyset,   \buffer^{\pol, \moddyn} = \emptyset, \buffer^{\moddyn, \dynfcn}_{\algiter-1}$
\STATE Randomly initialize $\polparams, \qfcnparams_1, \qfcnparams_2$ and $\bar{\polparams} \leftarrow \polparams, \bar{\qfcnparams}_1 \leftarrow \qfcnparams_1, \bar{\qfcnparams}_2 \leftarrow \qfcnparams_2$
\STATE $\state_0 \sim \initdist$
    \FOR{environment steps}
    \STATE $\act_t \sim \pol_\polparams(\state_t) + \expscale_2 \cdot \noise^{\behpol}_t$ with $\noise^{\behpol}_t \sim \pinknoiseproc^{\behpol}$ \ALGCOMMENT{\scriptsize{Sample explorative control action}}
    \STATE $\asf_t = \filterpol_{\algiter-1}(\state_t)$ \ALGCOMMENT{\scriptsize{Sample filter policy action}}
    \STATE $\viaact_t = \filter(\act_t, \hyperplaneVector_t, \hyperplaneOffset_t)$ with $(\hyperplaneVector_t, \hyperplaneOffset_t) = \hyperplaneTransformationFunction(\asf_t)$\ALGCOMMENT{\scriptsize{Filter control action}}
    \STATE $\state_{t+1} \sim \dynfcn(\cdot | \state_t, \viaact_t)$ \ALGCOMMENT{\scriptsize{Sample next state of environment by executing filtered action}}
    \STATE $\rew_{t+1} = \rewfcn(\state_t, \viaact_t, \state_{t+1})$ \ALGCOMMENT{\scriptsize{Determine control reward}}
    \STATE $\buffer^{\moddyn, \dynfcn} \leftarrow \buffer^{\moddyn, \dynfcn} \cup \{ \state_t, \viaact_t, \state_{t+1}, \rew_{t+1}\}$ \ALGCOMMENT{\scriptsize{Add filtered action to model learning buffer}}
    \STATE $\buffer^{\pol, \dynfcn} \leftarrow \buffer^{\pol, \dynfcn} \cup \{ \state_t, \act_t, \state_{t+1}, \rew_{t+1} - \| \viaact_t - \act_t \|_2 \}$ \ALGCOMMENT{\scriptsize{Add unfiltered action to environment control buffer}}
        \IF{model-based rollout frequency}
            \FOR{number of model-based rollouts}
            \STATE $\expscale_1 \sim \unidist(\expscale_{\mathrm{min}}, \expscale_{\mathrm{max}})$ \ALGCOMMENT{\scriptsize{Sample exploration scale~\cite{fasttd3}}}
            \STATE $\modstate_0 \sim \buffer^{\pol, \dynfcn}$
                \WHILE{$t < T$ and $\entropy(\rvmodstate_t) < \enttresh_1$ and $\sum_{t^{\prime} = 0}^t\entropy(\rvmodstate_{t^{\prime}}) < \enttresh_2 $}
                \STATE $\act_t \sim \pol_\polparams(\modstate_t) + \expscale_1 \cdot \noise^{\pol}_t$ with $\noise^{\pol}_t \sim \pinknoiseproc^{\pol}$\ALGCOMMENT{\scriptsize{Sample explorative control action}}
                \STATE $\asf_t = \filterpol_{\algiter}(\modstate_t)$ \ALGCOMMENT{\scriptsize{Sample deterministic filter action}}
                \STATE $\viaact_t = \filter(\act_t, \hyperplaneVector_t, \hyperplaneOffset_t)$ with $(\hyperplaneVector_t, \hyperplaneOffset_t) = \hyperplaneTransformationFunction(\asf_t)$ \ALGCOMMENT{\scriptsize{Potentially filter behavior policy according to~\eqref{eq:learned_filter}}}
                \STATE $\modstate_{t+1} \sim \moddyn(\cdot | \modstate_t, \viaact_t)$ \ALGCOMMENT{\scriptsize{Sample next state via the Infoprop rollout~\cite{Frauenknecht2025Infoprop}}}
                \STATE $\rew_{t+1} = \rewfcn(\modstate_t, \viaact_t, \modstate_{t+1})$ \ALGCOMMENT{\scriptsize{Determine control reward}}
                \STATE $\buffer^{\pol, \moddyn} \leftarrow \buffer^{\pol, \moddyn} \cup \{ \modstate_t, \act_t, \modstate_{t+1}, \rew_{t+1} - \| \viaact_t - \act_t \|_2 \}$ \ALGCOMMENT{\scriptsize{Add unfiltered action to model control buffer}}
                \ENDWHILE
            \ENDFOR
        \ENDIF
        \FOR{update steps}
        \STATE Sample mini-batch of $N$ transitions from $\buffer^{\pol, \moddyn} \cup \buffer^{\pol, \dynfcn}$
        \STATE $y = \rew_{t+1} - \| \viaact_t - \act_t \|_2 + \discount \min_{i \in \{1, 2 \}}\qfcn_{\bar{\qfcnparams}_i}(\modstate_{t+1}, \pol_{\bar{\polparams}}(\modstate_{t+1}) + \expscale_4 \cdot \noise)$  with $\noise \sim \mathcal{N}(0, I)$ \ALGCOMMENT{\scriptsize{Compute the TD target $y$}}
        \STATE $\qfcnparams_i \leftarrow \arg\min_{\qfcnparams_i} \frac{1}{N}\sum(y-\qfcn_{\qfcnparams_i}(\modstate_t, \act_t))^2$ \ALGCOMMENT{\scriptsize{Update the Q-functions by minimizing the squared TD error}}
            \IF{policy update frequency}
            \STATE $\polparams \leftarrow \arg\max_{\polparams} \frac{1}{N}\sum \qfcn_{\qfcnparams_1}(\modstate_t, \pol_{\polparams}(\modstate_t))$ \ALGCOMMENT{\scriptsize{Update policy by maximizing Q-value}}
            \STATE $\bar{\qfcnparams}_i \leftarrow \polyak \qfcnparams_i + (1-\polyak) \bar{\qfcnparams}_i$ \ALGCOMMENT{\scriptsize{Polyak averaging}}
            \STATE $\bar{\polparams} \leftarrow \polyak \polparams + (1-\polyak) \bar{\polparams}$ \ALGCOMMENT{\scriptsize{Polyak averaging}}
            \ENDIF
        \ENDFOR
    \ENDFOR
\OUTPUT $\pol_\polparams, \buffer^{\moddyn, \dynfcn}_{\algiter}$
\end{algorithmic}
\end{algorithm}
At the start of each control policy RL phase, we are given the filter $\mu_j$ from the current iteration and the model $\hat{p}_{j-1}$ from the previous iteration. The training procedure for the control policy is described in Algorithm~\ref{alg:performance_learning}. Essentially, this procedure follows the Infoprop algorithm, with FastTD3 serving as the model-free RL backbone. Additionally, we incorporate recent insights from~\cite{safetyFilterWhileTraining, markgraf2025SafeReinforcementLearning} by penalizing the control agent for deviations from the safety-filtered action, as illustrated in \Figref{fig:detailed_block_diagram}. The penalized reward is given by $r_t - \lVert a_t - a_t^V \rVert_2$, where the Euclidean distance measures the deviation between the control agent’s proposed action and the safety-filtered action.

When applying this penalty, it is important to store the unfiltered action $a_t$ in both the environment control buffer $\mathcal{D}^{\pi,p}$ and the model control buffer $\mathcal{D}^{\pi,\hat{p}}$. The filtered action $a_t^V$, which is executed in the environment, is stored in the model learning buffer $\mathcal{D}^{\hat{p},p}$. Training is terminated after a fixed number of environment interactions. The \dynasaur training loop continues with the next iteration of model learning.

\newpage
\section{Proofs}
\subsection{Bijection Between Parameter Space and Discriminating Hyperplanes}\label{app:proofTransformation}
\begin{lemma}[Maximum Offset of a Hypercube Intersecting Hyperplane] \label{lemma:MaxOffset}
Let $w \in \mathbb{R}^{\na}$ be a normalized normal vector, i.e., $\|w\|_2 = 1$.  
A hyperplane $\{a \in \actsp \mid w^\top a \geq b\}$ intersects the hypercube $\actsp = [-1,1]^{\na}$ only if
\[
|b| \le \|w\|_1.
\]
\end{lemma}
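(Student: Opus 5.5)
The plan is to read ``intersects'' as: the boundary hyperplane $\{a \mid w^\top a = b\}$ meets the hypercube $\actsp = [-1,1]^{\na}$. This is the reading under which a nontrivial discriminating hyperplane exists inside $\actsp$, and it is the one used for the offset in the transformation theorem. The key observation is that the linear functional $a \mapsto w^\top a$ is continuous on the compact, convex set $\actsp$. Its range over $\actsp$ is therefore a closed interval $[m, M]$, and the hyperplane $w^\top a = b$ meets $\actsp$ if and only if $b \in [m, M]$. So the whole proof reduces to computing $m$ and $M$.

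To compute $M = \max_{a \in \actsp} w^\top a$, the objective separates coordinatewise as $\sum_i w_i a_i$, and the constraints are the independent box constraints $a_i \in [-1,1]$. Each term $w_i a_i$ is maximized at $a_i = \operatorname{sign}(w_i)$, where it equals $|w_i|$. Hence $M = \sum_i |w_i| = \|w\|_1$, attained at the vertex $\operatorname{sign}(w)$. The same computation with $a_i = -\operatorname{sign}(w_i)$ gives $m = -\|w\|_1$, or one can argue directly by the symmetry $a \mapsto -a$ of $\actsp$. Equivalently, for every $a \in \actsp$ the bound $|w^\top a| \le \sum_i |w_i||a_i| \le \|w\|_1$ holds, since $\|a\|_\infty \le 1$; this is Hölder's inequality for the dual pair $\ell_\infty$/$\ell_1$.

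The conclusion then follows. If some $a \in \actsp$ satisfies $w^\top a = b$, then $|b| = |w^\top a| \le \|w\|_1$, which is the claimed necessary condition. I would also record the converse, since it is needed for the bijectivity theorem: by convexity of $\actsp$ and the intermediate value theorem along the segment joining the vertex $-\operatorname{sign}(w)$ to $\operatorname{sign}(w)$, every $b \in [-\|w\|_1, \|w\|_1]$ is attained. The normalization $\|w\|_2 = 1$ plays no role in the inequality itself; it only fixes the scale, so that $b$ is determined uniquely for a given $w$.

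The only real obstacle is interpretive rather than technical. Read literally, the half-space set $\{a \in \actsp \mid w^\top a \ge b\}$ is nonempty exactly when $b \le \|w\|_1$, with no lower bound on $b$. The two-sided bound $|b| \le \|w\|_1$ corresponds instead to the boundary hyperplane meeting $\actsp$, or equivalently to the half-space neither containing all of $\actsp$ nor missing it. I would state this interpretation explicitly at the start, so that the lower bound $b \ge -\|w\|_1$ is justified: for $b < -\|w\|_1$ every action is viable and the hyperplane does not cut $\actsp$ at all.
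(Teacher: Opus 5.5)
Your proposal is correct and follows essentially the same route as the paper. The paper's proof also reads ``intersects'' as the existence of some $a \in [-1,1]^{\na}$ with $w^\top a = b$, and then bounds $|b|$ by the maximum of the linear functional over the cube, which is attained at a vertex and equals $\|w\|_1$. Your explicit handling of the minimum via the symmetry $a \mapsto -a$, and of the converse via the intermediate value theorem, makes rigorous the ``necessary and sufficient'' claim that the paper asserts in one line.
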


\begin{proof}
A hyperplane intersects $[-1,1]^{\na}$ if and only if there exists some $a \in [-1,1]^{\na}$ such that
$w^\top a = b$. Hence, a necessary and sufficient condition is
\[
|b| \le \max_{a \in [-1,1]^{\na}} |w^\top a|.
\]
Since the maximization of a linear function over a hypercube is attained at the vertices, we have
\[
\vert b \vert \leq \max_{a \in [-1,1]^{\na}} w^\top a = \sum_{i=1}^{\na} |w_i| = \|w\|_1.
\]
\end{proof}

\begin{theorem}[Bijection]
\label{theo:bijection_app}
The function $\hyperplaneTransformationFunction$ mapping each $\asf \in \actspsf$ to a discriminating hyperplane intersecting $\actsp = [-1, 1]^{\na}$ is bijective. It is defined as $\hyperplaneTransformationFunction(\asf_t) = (\hyperplaneVector_t,\hyperplaneOffset_t)$ with
\begin{equation*}
    \hyperplaneVector_t = \frac{\asf_t}{\Vert \asf_t \Vert_2}, \text{ and } \hyperplaneOffset_t = (2\cdot\Vert \asf_t \Vert_2 -1 ) \Vert  \hyperplaneVector_t \Vert _1.
\end{equation*}
\end{theorem}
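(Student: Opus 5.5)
To prove the theorem, I first fix the codomain precisely, since bijectivity depends on it. Let $\mathcal{H}$ denote the set of \emph{normalized} discriminating hyperplanes intersecting $\actsp$, namely pairs $(\hyperplaneVector,\hyperplaneOffset)$ with $\|\hyperplaneVector\|_2=1$ and $|\hyperplaneOffset| \le \|\hyperplaneVector\|_1$. Normalization removes the scaling ambiguity $c\,\hyperplaneVector^\top\act \ge c\,\hyperplaneOffset$ discussed in Section~\ref{subsec:safe_act_formulation}. The intersection condition is the one from Lemma~\ref{lemma:MaxOffset}, read as an if-and-only-if: the proof of that lemma already shows that $\{\act : \hyperplaneVector^\top \act = \hyperplaneOffset\}$ meets $[-1,1]^{\na}$ exactly when $|\hyperplaneOffset|\le \max_{\act\in\actsp}\hyperplaneVector^\top\act = \|\hyperplaneVector\|_1$.

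With this codomain, the plan has three parts.

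\emph{Well-definedness.} For $\asf\in\actspsf$ we have $\asf\neq 0$, so $\hyperplaneVector=\asf/\|\asf\|_2$ is defined and has unit norm. Since $0<\|\asf\|_2\le 1$, the factor $2\|\asf\|_2-1$ lies in $(-1,1]$. Hence $|\hyperplaneOffset|\le\|\hyperplaneVector\|_1$, and $\hyperplaneTransformationFunction(\asf)\in\mathcal{H}$.

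\emph{Injectivity.} Suppose $\hyperplaneTransformationFunction(\asf)=\hyperplaneTransformationFunction(\asf')$. Equal normal vectors mean $\asf$ and $\asf'$ share the same direction $\hyperplaneVector$. Because $\|\hyperplaneVector\|_1\ge\|\hyperplaneVector\|_2=1>0$, we can divide the offset equation by $\|\hyperplaneVector\|_1$, which gives $\|\asf\|_2=\|\asf'\|_2$. The same direction and the same norm then give $\asf=\asf'$.

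\emph{Surjectivity.} Take $(\hyperplaneVector,\hyperplaneOffset)\in\mathcal{H}$. Set $r=\tfrac12\bigl(1+\hyperplaneOffset/\|\hyperplaneVector\|_1\bigr)\in[0,1]$ and $\asf=r\hyperplaneVector$. A direct check shows $\hyperplaneTransformationFunction(\asf)=(\hyperplaneVector,\hyperplaneOffset)$ whenever $r>0$. This formula also gives the explicit inverse $\hyperplaneTransformationFunction^{-1}$.

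The main obstacle is the boundary case $\hyperplaneOffset=-\|\hyperplaneVector\|_1$, which forces $r=0$ and hence $\asf=0\notin\actspsf$. Geometrically, this is the hyperplane touching $\actsp$ only at the vertex opposite to $\hyperplaneVector$, so that every action is classified as viable. The same issue appears for every direction, and all these boundary pairs would collapse to $\asf=0$, where no direction is defined. This is exactly why the origin is excluded from $\actspsf$.

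I would resolve the obstacle by defining the codomain with the strict inequality $-\|\hyperplaneVector\|_1<\hyperplaneOffset\le\|\hyperplaneVector\|_1$. Equivalently, I would exclude the degenerate hyperplanes that merely support $\actsp$ at the opposite vertex and leave every action viable. I would state this convention explicitly before the three steps, so that the intervals $(0,1]$ for $\|\asf\|_2$ and $(-1,1]$ for $2\|\asf\|_2-1$ match exactly. The remaining verifications are routine algebra.
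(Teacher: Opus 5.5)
Your proposal is correct and follows the paper's route: well-definedness via Lemma~\ref{lemma:MaxOffset}, which you rightly read as an equivalence since both directions are used, and surjectivity via the same explicit preimage $\asf = \tfrac12\bigl(1 + \hyperplaneOffset/\|\hyperplaneVector\|_1\bigr)\hyperplaneVector$. It is in fact tighter than the paper's proof: the paper's ``injectivity'' paragraph only shows that $\hyperplaneTransformationFunction$ is well defined, not that distinct $\asf$ give distinct pairs $(\hyperplaneVector,\hyperplaneOffset)$, and its surjectivity step silently works on the closed unit ball including $\asf=0$, so the boundary case $\hyperplaneOffset=-\|\hyperplaneVector\|_1$ you isolate is a genuine defect of the statement as written and your strict-inequality codomain is the right repair.
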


\begin{proof}
\emph{Injectivity: For any $\asf \in \actspsf=\{u \in \mathbb{R}^{\na} \mid \Vert u\Vert_2 \leq 1\}$, the hyperplane
$\{a \in \actsp \mid \hyperplaneVector^\top a = \hyperplaneOffset\}$, with $h(u)=(w,b)$ intersects the action space
$\actsp = [-1,1]^{\na}$.}

By construction, $\hyperplaneVector$ is a normalized normal vector with
$\|\hyperplaneVector\|_2 = 1$. Since $\|\asf\|_2 \le 1$, we have
\[
2\|\asf\|_2 - 1 \in [-1,1],
\]
and therefore
\[
|\hyperplaneOffset|
= |2\|\asf\|_2 - 1| \, \|\hyperplaneVector\|_1
\le \|\hyperplaneVector\|_1.
\]
By Lemma~\ref{lemma:MaxOffset}, these conditions are sufficient for the hyperplane to intersect $[-1,1]^d$.
Given $\asf$, the construction uniquely determines both the normalized normal vector
$\hyperplaneVector$ and the offset $\hyperplaneOffset$. Hence,
$\hyperplaneTransformationFunction$ is injective.

\emph{Surjectivity.}
Let $(w,b)$ be a hyperplane intersecting $[-1,1]^{\na}$ with normalized normal  $\|w\|_2 = 1$. By Lemma ~\ref{lemma:MaxOffset} the absolute offset must be smaller than the maximal offset $|b| \le \|w\|_1$. Define
\[
l = \frac{1}{2}\left(\frac{b}{\|w\|_1} + 1\right),
\qquad
u = w \, l.
\]
Then $l=\|u\|_2 \le 1$ and $u \in \actspsf$. Substituting $u$ into
$\hyperplaneTransformationFunction$ recovers $(w,b)$ by construction. Hence, $\hyperplaneTransformationFunction$ is surjective.

Therefore, $\hyperplaneTransformationFunction$ is bijective.
\end{proof}

\subsection{Expected Time to Failure}\label{app:time_to_failure}
\newcommand{\ret}{G^{\mathrm{SF}}}
\newcommand{\ft}{T_{\mathrm{F}}}

\begin{lemma}[Expected Time to Failure]
\label{lemma:time_fail}
A filter action value function $\filterqfcn(\state, \asf)$ approximating the action values of a filter policy $\filterpol(\state)$ for a reward function
\begin{equation*}
\rewfcnsf(\state_t,\asf_t,\state_{t+1}) =
\begin{cases}
1, & \text{if } \state_{t+1} \in \safesetSFi{}, \\
-\frac{1}{1-\discountsf}, & \text{otherwise}
\end{cases}
\end{equation*}
encodes the expected time to failure $\ft$ under the filter policy $\filterpol$ in the following form:
\begin{equation*}
    \mathbb{E}_\filterpol \lb\ft|\state \rb = \log_{\discountsf} \left( \frac{1 - \filterqfcn(\state, \filterpol(\state)) (1 - \discountsf)}{2}\right)
\end{equation*}
\end{lemma}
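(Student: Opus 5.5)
The plan is to compute the discounted return explicitly as a function of the (random) failure time, evaluate it in closed form, and invert the resulting expression for $\ft$. Fix a state $\state$ and follow $\filterpol$, so that $\asf = \filterpol(\state)$. Let $\ft \in \mathbb{N}$ be the index of the transition at which $\safesetSFi{}$ is first left; the episode terminates there. Along such a trajectory the rewards are $1$ for the first $\ft - 1$ transitions and $-\frac{1}{1-\discountsf}$ at transition $\ft$. The return is therefore
\begin{equation*}
\ret = \sum_{k=0}^{\ft-2} \discountsf^{k} - \discountsf^{\ft-1}\frac{1}{1-\discountsf} = \frac{1 - \discountsf^{\ft-1}}{1-\discountsf} - \frac{\discountsf^{\ft-1}}{1-\discountsf} = \frac{1 - 2\discountsf^{\ft-1}}{1-\discountsf}.
\end{equation*}
I will index time so that this matches the paper's formula. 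If $\ft$ counts the number of safe steps before failure, the exponent is $\discountsf^{\ft}$, giving $\ret = \frac{1-2\discountsf^{\ft}}{1-\discountsf}$. This convention also reproduces the boundary values stated in the main text: $\ft = 0$ gives $-\frac{1}{1-\discountsf}$, and $\ft = \infty$ gives $\frac{1}{1-\discountsf}$. I will fix this convention explicitly at the start.

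Next, solving for the discount power gives $\discountsf^{\ft} = \frac{1 - \ret(1-\discountsf)}{2}$, and hence $\ft = \log_{\discountsf}\!\left(\frac{1-\ret(1-\discountsf)}{2}\right)$. Taking expectations under $\filterpol$ and identifying $\filterqfcn(\state,\filterpol(\state)) = \mathbb{E}_\filterpol[\ret \mid \state]$ gives $\mathbb{E}_\filterpol[\discountsf^{\ft}\mid\state] = \frac{1-\filterqfcn(\state,\filterpol(\state))(1-\discountsf)}{2}$. This identity is exact, because $\ret$ is affine in $\discountsf^{\ft}$.

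The main obstacle is the final step, passing from $\mathbb{E}[\discountsf^{\ft}]$ to $\discountsf^{\mathbb{E}[\ft]}$. Since $x \mapsto \discountsf^{x}$ is convex, Jensen's inequality only yields $\discountsf^{\mathbb{E}[\ft]} \le \mathbb{E}[\discountsf^{\ft}]$, i.e. $\mathbb{E}[\ft] \ge \log_{\discountsf}(\cdot)$. Equality holds exactly when $\ft$ is deterministic given $\state$, for instance under deterministic model dynamics with a deterministic behavior action. I would therefore prove the formula with equality under that condition. In the general stochastic case, I would state it either as a lower bound, or as the exact identity for the "certainty-equivalent" time to failure $\log_{\discountsf}\mathbb{E}[\discountsf^{\ft}]$. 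I would also note that the formula is well defined because $\filterqfcn \in [-\frac{1}{1-\discountsf}, \frac{1}{1-\discountsf}]$, which makes the argument of the logarithm lie in $[0,1]$; the value $0$ corresponds to $\ft = \infty$.
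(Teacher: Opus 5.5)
Your closed-form return, your indexing convention ($T_{\mathrm{F}}$ counts the safe transitions, so the penalty carries the factor $\gamma_{\mathrm{SF}}^{T_{\mathrm{F}}}$), and the identification $Q^{\mathrm{SF}}(s,\mu(s))=\mathbb{E}_\mu[G^{\mathrm{SF}}\mid s]$ are exactly what the paper does. The two arguments differ only at the last step, and there you are right and the paper is not. Its proof ends the chain of equalities with
\[
\mathbb{E}_\mu\left[\frac{1-2\gamma_{\mathrm{SF}}^{T_{\mathrm{F}}}}{1-\gamma_{\mathrm{SF}}}\,\middle|\,s\right]=\frac{1-2\gamma_{\mathrm{SF}}^{\mathbb{E}_\mu[T_{\mathrm{F}}\mid s]}}{1-\gamma_{\mathrm{SF}}},
\]
that is, it moves the expectation into the exponent with no justification. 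As you say, $x\mapsto\gamma_{\mathrm{SF}}^{x}$ is strictly convex, so this is an equality only when $T_{\mathrm{F}}$ is almost surely constant given $s$. In general only $\mathbb{E}_\mu[T_{\mathrm{F}}\mid s]\ge\log_{\gamma_{\mathrm{SF}}}\bigl(\tfrac{1-Q^{\mathrm{SF}}(s,\mu(s))(1-\gamma_{\mathrm{SF}})}{2}\bigr)$ survives. The paper itself stresses that $p^{\mathrm{SF}}$ is highly stochastic (a noisy behavior policy inside the filter MDP, plus sampled model transitions). So the lemma as literally stated fails in exactly the setting where it is used. Your proposal has no gap; the gap is in the paper's proof.

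You can drop the hedging and state the correction outright, backed by two counterexamples.
\begin{itemize}
\item With $\gamma_{\mathrm{SF}}=0.99$ and $T_{\mathrm{F}}\in\{0,200\}$ with probability $\tfrac12$ each, $\mathbb{E}[T_{\mathrm{F}}]=100$, but the formula returns $\log_{0.99}\bigl(\tfrac{1+0.99^{200}}{2}\bigr)\approx 56$.
\item If the filter keeps the rollout safe forever with probability $p\in(0,1)$, then $\mathbb{E}[T_{\mathrm{F}}]=\infty$ while the formula is finite.
\end{itemize}
The exact content of the lemma is your identity $\log_{\gamma_{\mathrm{SF}}}\mathbb{E}_\mu[\gamma_{\mathrm{SF}}^{T_{\mathrm{F}}}\mid s]=\log_{\gamma_{\mathrm{SF}}}\bigl(\tfrac{1-Q^{\mathrm{SF}}(s,\mu(s))(1-\gamma_{\mathrm{SF}})}{2}\bigr)$. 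It equals the expected time to failure when $T_{\mathrm{F}}$ is deterministic and lower-bounds it otherwise. This weaker reading still supports the paper's use of $Q^{\mathrm{SF}}$ as a monotone proxy for time to failure.
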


\begin{proof}
The safety filter return $\ret$ for a trajectory terminating after $\ft$ steps is given by
\begin{equation}
    \begin{aligned}
         \ret &= \sum_{t=0}^{\ft -1} \discountsf^t \cdot 1 + \discountsf^{\ft} \left( - \frac{1}{1-\discountsf}\right)\\
        &= \frac{1 - \discountsf^{\ft}}{1 - \discountsf} - \frac{\discountsf^{\ft}}{1-\discountsf}\\
        & =\frac{1 - 2 \discountsf^{\ft}}{1-\discountsf}.
    \end{aligned}
\end{equation}
The state value function $V^{\mathrm{SF}}(\state) = \filterqfcn(\state, \filterpol(\state))$ \cite{Sutton1998} encodes the expected return under the policy $\filterpol$
\begin{equation}
\label{eq:ttf}
    \begin{aligned}
        \filterqfcn(\state, \filterpol(\state)) &= \mathbb{E}_\filterpol \lb \ret \mid \rvstate =\state, U = \filterpol(\state)\rb\\
        &= V^{\mathrm{SF}}(\state)\\
        &= \mathbb{E}_\filterpol \lb \ret \mid \state \rb\\
        &= \mathbb{E}_\filterpol \lb \frac{1 - 2 \discountsf^{\ft}}{1-\discountsf} \Big|\state \rb\\
        &= \frac{1 - 2 \discountsf^{\mathbb{E}_\filterpol \lb\ft|\state \rb}}{1-\discountsf}
    \end{aligned}
\end{equation}
Rearranging \eqref{eq:ttf} yields the desired result.
\end{proof}

\newpage

\section{Ablations}\label{app:ablationWalker}
\begin{figure}[tb]
    \centering
    \includegraphics[width=0.8\linewidth]{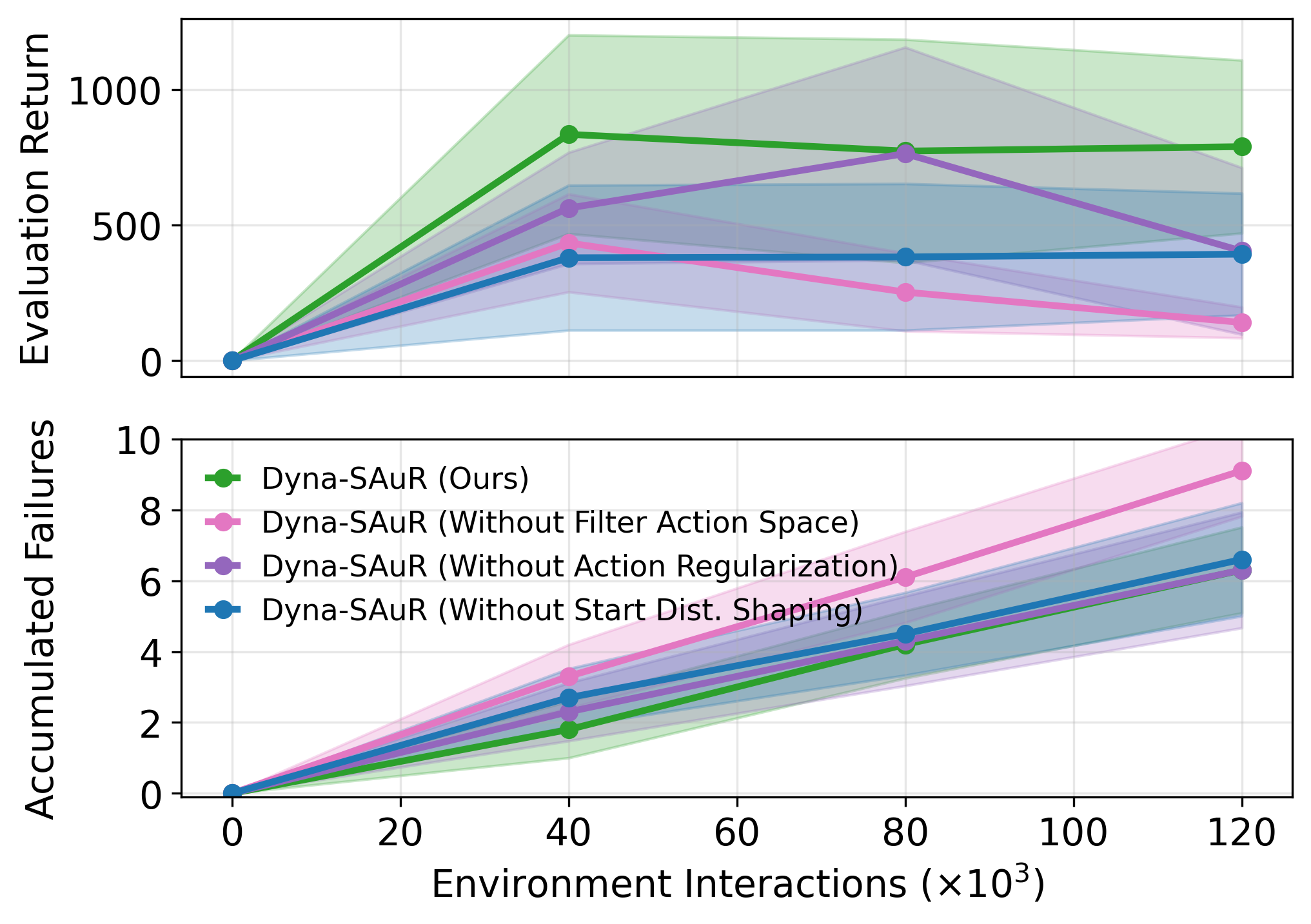}
    \caption{Ablation of \dynasaur{} design choices for Walker. \emph{Removing the starting state distribution introduced in Section~\ref{subsec:safe_dist_formulation} or the action regularization introduced in Section~\ref{subsec:safe_rew_formulation} leads to reduced control performance and increased accumulated failures. In contrast, incorrect parameterization of the hyperplane defining action introduced in Section~\ref{subsec:safe_act_formulation} prevents effective learning, indicating its central role in the method.}}
    \label{fig:ablationWalker} 
\end{figure}
As visualized in \Figref{fig:ablationWalker}, the Walker ablation exhibits trends similar to those observed for CartPole with \dynasaur{} depicted in \Figref{fig:ablation_cartpole}. When removing the efficient action-space parameterization of the safety filter, we observe the highest number of accumulated failures and the lowest return. This indicates that the proposed parameterization reduces the search space and improves the effectiveness of learning safety filters with RL. When the action regularization is removed, the safety filter RL agent more easily learns restrictive safety filters, which leads to a lower final return. When removing the rewards encouraging safe starting states as well as the failure-preventing starting states, i.e., without starting-state distribution shaping, the final return is also reduced.

\newpage
\section{Experimental Setup}\label{app:experimental_setup}
\subsection{Prior Data}
\label{app:priorData}
Since safety without any prior knowledge is an ill-defined problem~\cite{actsafe_ICLR_2025_similarToIsol}, we assume the availability of some prior data to train an initial dynamics model. In our experiments, this prior data is generated using a stabilizing controller. Each action produced by the prior controller is perturbed with pink noise and every second step replaced by a uniformly sampled action to improve action-space coverage and enable reliable uncertainty estimation. 

For CartPole, we design a standard equilibrium-stabilizing Linear Quadratic Regulator (LQR). For Walker, we train an RL agent to step forward and maintain balance. We are using a partially trained version of this agent to generate prior data.

For CartPole, we use 30{,}000 environment interactions, while for Walker we use 4{,}000{,}000 interactions. Due to the high dimensionality of the Walker environment, this larger dataset is necessary to sufficiently cover a local region of the state–action space and to obtain an adequately accurate initial dynamics model. Nevertheless, as shown in Fig.~\ref{fig:dataDistEnvs}, our method generalizes beyond the initial data distribution and learns a walking gait that is not present in the prior data.

\subsection{Hyperparameters}\label{app:hyperparameter}
All experiments were conducted over $10$ random seeds, with hyperparameters chosen to ensure fair and comparable comparisons across methods. For all environments, all episodes are truncated after $500$ steps.

\subsubsection{Hyperparameters \dynasaur{}}

\begin{table}[t]
\centering
\caption{Hyperparameters for \dynasaur{} on Walker and CartPole environments.}
\label{tab:dynasaur_hyperparams}
\begin{tabular}{lcc}
\hline
\textbf{Hyperparameter} & \textbf{Cartpole} & \textbf{Walker} \\
\hline
\textbf{Model Learning} \\
\hline
Ensemble size $E$ & 7 & 7 \\
Number of hidden layers & 4 & 4 \\
Number of hidden neurons & 200 & 200 \\
Learning rate & 0.0006 & 0.0006 \\
Weight decay & 0.0007 & 0.0007 \\
Patience for early stopping & 10 & 10 \\
Batch size $N^{\mathrm{Model}}$ & 256 & 256 \\
\hline
\textbf{Safety Filter Learning} \\
\hline
Number of model rollouts $\safetyFilterModelBasedRollouts$ & 100 & 500 \\
Minimal noise scale $\expscale_{\mathrm{min}}$ & 0.001 & 0.001 \\
Maximum noise scale $\expscale_{\mathrm{max}}$ & 0.3 & 0.2 \\
Accurate quantile $\zeta_1$ & 0.99 & 0.99 \\
Exceptionally accurate quantile $\zeta_2$ & 0.01 & 0.01 \\
Learning rate & 0.0003 & 0.0003 \\
Buffer size filter buffer $\buffer^{\filterpol, \moddyn}$ &1,000,000 & 1,000,000\\
Batch size $N$ &100,000&100,000\\
Polyak averaging factor $\bar{\tau}$ & 0.001 & 0.001\\
Policy update frequency &2 & 2\\
Action regularization factor $c$ & 0.1 & 1 \\
Pink noise process scale $\sigma_3$ & 0.33 & 0.33 \\
Policy exploration scale $\sigma_2$ & 0.1 & 0.1 \\
TD3 smoothing scale $\sigma_4$ &0.003 &0.003 \\
Discount Factor $\gamma^{\mathrm{SF}}$ & 0.99 & 0.99 \\
Factor control starting states $\nu_1$ &0.05&0.1 \\
Factor failed rollouts $\nu_2$ &0.3 &0.45\\
Factor high return rollouts $\nu_3$ & 0.65 & 0.45 \\
Mixing factor $\nu_4$ & 0.5 & 0.5 \\
Mixing factor $\nu_5$ &0.5&0.5\\
Number of evaluation rollouts & 2000 & 1000 \\
Maximum time to failure early $\mathrm{MTF}_{\mathrm{early}}$ &10 &20\\
Maximum time to failure late $\mathrm{MTF}_{\mathrm{late}}$ &50 &100\\
Number of states $\mathrm{MTF}_{\mathrm{num}}$ & 10&5 \\
\hline
\textbf{Control Policy Learning} \\
\hline
Number of model rollouts & 100 & 100 \\
Minimal noise scale $\expscale_{\mathrm{min}}$ & 0.001 & 0.001 \\
Maximum noise scale $\expscale_{\mathrm{max}}$ & 0.2 & 0.2 \\
Accurate quantile $\zeta_1$ & 0.99 & 0.99 \\
Exceptionally accurate quantile $\zeta_2$ & 0.01 & 0.01 \\
Learning rate & 0.0003 & 0.0003 \\
Polyak averaging factor $\bar{\tau}$ & 0.001 & 0.001\\
Policy update frequency &2 & 2\\
Policy exploration scale $\sigma_2$ & 0.1 & 0.1 \\
TD3 smoothing scale $\sigma_4$ &0.003 &0.003 \\
Discount Factor $\gamma$ & 0.99 & 0.99 \\
Number of environment interactions per \dynasaur{ } iteration & 40,000 & 40,000 \\
\hline
\end{tabular}
\end{table}
The hyperparameters for \dynasaur{ } are shown in Table~\ref{tab:dynasaur_hyperparams}.

\subsubsection{Hyperparameters Baseline Infoprop}

\begin{table}[t]
\centering
\caption{Hyperparameters for Infoprop on Walker and CartPole environments.}
\label{tab:infoprop_hyperparams}
\begin{tabular}{lcc}
\hline
\textbf{Hyperparameter} & \textbf{Cartpole} & \textbf{Walker} \\
\hline
\textbf{Model Learning} \\
\hline
Ensemble size $E$ & 7 & 7 \\
Number of hidden layers & 4 & 4 \\
Number of hidden neurons & 200 & 400 \\
Learning rate & 0.001 & 0.0006 \\
Weight decay & 0.0002 & 0.0007 \\
Patience for early stopping & 9 & 9 \\
Retrain interval & 40000 & 40000 \\
\hline
\textbf{Model Rollouts} \\
\hline
Accurate quantile $\zeta_1$ & 0.99 & 0.99 \\
Exceptionally accurate quantile $\zeta_2$ & 0.01 & 0.01 \\
Scaling factor $\xi$ & 1 & 1 \\
Rollout interval & 250 & 250 \\
Rollout batch size & 100000 & 100000 \\
\hline
\textbf{SAC Agent} \\
\hline
Number of hidden neurons & 1024 & 1024 \\
Number of hidden layers &2 &2 \\
Learning rate & 0.005 & 0.0002\\
SAC target entropy & 0 & -7 \\
Target update interval & 5 & 4 \\
Update steps $G$ & 10 & 10 \\
\hline
\end{tabular}
\end{table}
As shown in Table~\ref{tab:infoprop_hyperparams}, we use the standard Infoprop~\cite{Frauenknecht2025Infoprop} hyperparameters for the baseline. The only exception is that we increase the retraining interval (measured in environment steps) at which the model is retrained. This change is made to keep model training computationally feasible due to the relatively large amount of prior data in the Walker environment, and to match the retraining interval used by the Dynasaur model.

\subsubsection{Hyperparameters Benchmark PPO-Lagrangian}

\begin{table}[t]
\centering
\caption{Hyperparameters for PPO-Lagrangian on Walker and CartPole environments.}
\label{tab:ppo_lagrangian_hyperparams}
\begin{tabular}{lcc}
\hline
\textbf{Hyperparameter} & \textbf{Cartpole} & \textbf{Walker} \\
\hline
Number of hidden neurons & 64 & 128 \\
Number of hidden layers &2 &2 \\
Steps per epoch & 400 & 4000 \\
Gamma & 0.99 & 0.99 \\
Lambda & 0.97 & 0.97 \\
Cost gamma & 0.97 & 0.97 \\
Cost lambda & 0.97 & 0.97 \\
Penalty learning rate & 0.05 &0.05 \\
Target Kullback-Leibler divergence & 0.01 & 0.01 \\
Value function learning rate & 0.001 & 0.001 \\
\hline
\end{tabular}
\end{table}
As shown in Table~\ref{tab:ppo_lagrangian_hyperparams}, we use standard PPO-Lagrangian~\cite{Ray2019SafeExploration} hyperparameters for the benchmark.

\subsubsection{Hyperparameters Benchmark RL-DH}

\begin{table}[t]
\centering
\caption{Hyperparameters for RL-DH on Walker and CartPole environments.}
\label{tab:rldh_lagrangian_hyperparams}
\begin{tabular}{lcc}
\hline
\textbf{Hyperparameter} & \textbf{Cartpole} & \textbf{Walker} \\
\hline
\textbf{Safety Filter PPO Agent} \\
\hline
Number of hidden neurons & 64 & 128 \\
Number of hidden layers &2 &2 \\
Steps per epoch & 400 & 4000 \\
Gamma & 0.99 & 0.99 \\
Lambda & 0.97 & 0.97 \\
Target Kullback-Leibler divergence & 0.01 & 0.01 \\
Value function learning rate & 0.001 & 0.001 \\
Policy learning rate & 0.0003 & 0.0003 \\
Environment Steps for Safety Filter Training & 30000 & 4000000 \\
Clip Ratio & 0.2 & 0.2 \\
\hline
\textbf{Control PPO Agent} \\
Number of hidden neurons & 64 & 128 \\
Number of hidden layers &2 &2 \\
Steps per epoch & 400 & 4000 \\
Gamma & 0.99 & 0.99 \\
Lambda & 0.97 & 0.97 \\
Target Kullback-Leibler divergence & 0.01 & 0.01 \\
Value function learning rate & 0.001 & 0.001 \\
Policy learning rate & 0.0003 & 0.0003 \\
Clip Ratio & 0.2 & 0.2 \\
\hline
\end{tabular}
\end{table}
As shown in Table~\ref{tab:rldh_lagrangian_hyperparams}, we use standard PPO-Lagrangian~\cite{black-box-lavankul} hyperparameters for the benchmark.

\clearpage
\newpage
\section{Definitions and Terminology}
\subsection{Discussion Filter Policy Learning Problem}\label{subsec:clari_filter_prob}
We write the filter policy learning problem
\begin{equation}
    \filterpol^* = \argmax_{\filterpol \in \filterpolset} \{ \mathrm{Img}(\filter_{\filterpol}^\pol) \subseteq \modelSafePolsSet{} \},
\end{equation}
as a shorthand notation for finding a filter policy $\filterpol^*\in \filterpolset$, such that
\begin{align}
 \mathrm{Img}(\filter_{\filterpol^*}^\pol) \subseteq \modelSafePolsSet{} \text{ and } \nexists\filterpol\in\filterpolset \text { with } \mathrm{Img}(\filter_{\filterpol^*}^\pol) \subsetneq \mathrm{Img}(\filter_{\filterpol}^\pol).
\end{align}

\subsection{Safety and Viability}\label{subsec:safe_and_viable}

The term ``safe states'' is used inconsistently across the safety literature. For instance, \cite{AsymptoticStatewise2019} employ it to denote what our work terms viable states, whereas \cite{safeLearningInRobotics} use the term consistently with the definition adopted in this work.

Figure~\ref{fig:viablity_vs_safety} illustrates the terminology adopted in this work. As engineers, we define unsafe states $\failureset$ as states the agent must never reach, e.g., the robot has fallen and lies on the ground. Thus, these failure states are typically easy to define. 
The set of safe states $\safeset$ is the complement of the set of unsafe states $\failureset$.
Consequently, some safe states inevitably lead to failure, e.g., a robot stumbling in a way that it cannot recover but does not yet lie on the ground.
Therefore, the more informative distinction is between viable states $\viabilitykernel$ that remain in the set of safe states indefinitely, given a suitable control policy, and unviable states $\unviabilitykernel$ that are safe but will fail eventually. This distinction is the non-trivial problem \dynasaur{} addresses.

Figure \ref{fig:viablity_vs_safety} provides a minimal illustrative example of the distinction.
The car on the slope is safe but unviable, as it enters $\failureset$ under any control sequence, since the engine torque is insufficient to climb the slope.
\begin{figure}[h]
    \centering
 
\tikzset{
pattern size/.store in=\mcSize, 
pattern size = 5pt,
pattern thickness/.store in=\mcThickness, 
pattern thickness = 0.3pt,
pattern radius/.store in=\mcRadius, 
pattern radius = 1pt}
\makeatletter
\pgfutil@ifundefined{pgf@pattern@name@_tsx3guzus}{
\pgfdeclarepatternformonly[\mcThickness,\mcSize]{_tsx3guzus}
{\pgfqpoint{0pt}{0pt}}
{\pgfpoint{\mcSize+\mcThickness}{\mcSize+\mcThickness}}
{\pgfpoint{\mcSize}{\mcSize}}
{
\pgfsetcolor{\tikz@pattern@color}
\pgfsetlinewidth{\mcThickness}
\pgfpathmoveto{\pgfqpoint{0pt}{0pt}}
\pgfpathlineto{\pgfpoint{\mcSize+\mcThickness}{\mcSize+\mcThickness}}
\pgfusepath{stroke}
}}
\makeatother

 
\tikzset{
pattern size/.store in=\mcSize, 
pattern size = 5pt,
pattern thickness/.store in=\mcThickness, 
pattern thickness = 0.3pt,
pattern radius/.store in=\mcRadius, 
pattern radius = 1pt}
\makeatletter
\pgfutil@ifundefined{pgf@pattern@name@_jnlnrsv3z}{
\pgfdeclarepatternformonly[\mcThickness,\mcSize]{_jnlnrsv3z}
{\pgfqpoint{0pt}{0pt}}
{\pgfpoint{\mcSize+\mcThickness}{\mcSize+\mcThickness}}
{\pgfpoint{\mcSize}{\mcSize}}
{
\pgfsetcolor{\tikz@pattern@color}
\pgfsetlinewidth{\mcThickness}
\pgfpathmoveto{\pgfqpoint{0pt}{0pt}}
\pgfpathlineto{\pgfpoint{\mcSize+\mcThickness}{\mcSize+\mcThickness}}
\pgfusepath{stroke}
}}
\makeatother

 
\tikzset{
pattern size/.store in=\mcSize, 
pattern size = 5pt,
pattern thickness/.store in=\mcThickness, 
pattern thickness = 0.3pt,
pattern radius/.store in=\mcRadius, 
pattern radius = 1pt}
\makeatletter
\pgfutil@ifundefined{pgf@pattern@name@_1djbo2pi7}{
\pgfdeclarepatternformonly[\mcThickness,\mcSize]{_1djbo2pi7}
{\pgfqpoint{0pt}{0pt}}
{\pgfpoint{\mcSize+\mcThickness}{\mcSize+\mcThickness}}
{\pgfpoint{\mcSize}{\mcSize}}
{
\pgfsetcolor{\tikz@pattern@color}
\pgfsetlinewidth{\mcThickness}
\pgfpathmoveto{\pgfqpoint{0pt}{0pt}}
\pgfpathlineto{\pgfpoint{\mcSize+\mcThickness}{\mcSize+\mcThickness}}
\pgfusepath{stroke}
}}
\makeatother

 
\tikzset{
pattern size/.store in=\mcSize, 
pattern size = 5pt,
pattern thickness/.store in=\mcThickness, 
pattern thickness = 0.3pt,
pattern radius/.store in=\mcRadius, 
pattern radius = 1pt}
\makeatletter
\pgfutil@ifundefined{pgf@pattern@name@_8lil2sq05}{
\pgfdeclarepatternformonly[\mcThickness,\mcSize]{_8lil2sq05}
{\pgfqpoint{0pt}{0pt}}
{\pgfpoint{\mcSize+\mcThickness}{\mcSize+\mcThickness}}
{\pgfpoint{\mcSize}{\mcSize}}
{
\pgfsetcolor{\tikz@pattern@color}
\pgfsetlinewidth{\mcThickness}
\pgfpathmoveto{\pgfqpoint{0pt}{0pt}}
\pgfpathlineto{\pgfpoint{\mcSize+\mcThickness}{\mcSize+\mcThickness}}
\pgfusepath{stroke}
}}
\makeatother
\tikzset{every picture/.style={line width=0.75pt}} 
\resizebox{0.5\linewidth}{!}{
\begin{tikzpicture}[x=0.75pt,y=0.75pt,yscale=-1,xscale=1]

\draw  [draw opacity=0][pattern=_tsx3guzus,pattern size=11.25pt,pattern thickness=0.75pt,pattern radius=0pt, pattern color={rgb, 255:red, 31; green, 119; blue, 180}] (3,28) -- (73,28) -- (73,8) -- (3,8) -- cycle ;
\draw  [draw opacity=0][pattern=_jnlnrsv3z,pattern size=11.25pt,pattern thickness=0.75pt,pattern radius=0pt, pattern color={rgb, 255:red, 214; green, 39; blue, 40}] (93,58) -- (143,58) -- (143,28) -- (93,28) -- cycle ;
\draw    (3,28) -- (73,28) ;
\draw  [draw opacity=0][pattern=_1djbo2pi7,pattern size=11.25pt,pattern thickness=0.75pt,pattern radius=0pt, pattern color={rgb, 255:red, 44; green, 160; blue, 44}] (3,58) -- (93,58) -- (93,28) -- (3,28) -- cycle ;
\draw  [fill={rgb, 255:red, 255; green, 255; blue, 255 }  ,fill opacity=1 ] (51.42,16) -- (70.42,16) -- (70.42,26) -- (51.42,26) -- cycle ;
\draw  [fill={rgb, 255:red, 0; green, 0; blue, 0 }  ,fill opacity=1 ] (51.63,26.78) .. controls (51.63,25.25) and (52.88,24) .. (54.42,24) .. controls (55.95,24) and (57.2,25.25) .. (57.2,26.78) .. controls (57.2,28.32) and (55.95,29.57) .. (54.42,29.57) .. controls (52.88,29.57) and (51.63,28.32) .. (51.63,26.78) -- cycle ;
\draw  [fill={rgb, 255:red, 0; green, 0; blue, 0 }  ,fill opacity=1 ] (64.63,26.78) .. controls (64.63,25.25) and (65.88,24) .. (67.42,24) .. controls (68.95,24) and (70.2,25.25) .. (70.2,26.78) .. controls (70.2,28.32) and (68.95,29.57) .. (67.42,29.57) .. controls (65.88,29.57) and (64.63,28.32) .. (64.63,26.78) -- cycle ;

\draw    (73,28) -- (93,38) ;
\draw    (93,58) -- (103,48) ;
\draw    (113,58) -- (103,48) ;
\draw    (93,38) -- (93,58) ;
\draw    (113,58) -- (123,48) ;
\draw    (133,58) -- (123,48) ;
\draw  [draw opacity=0][pattern=_8lil2sq05,pattern size=11.25pt,pattern thickness=0.75pt,pattern radius=0pt, pattern color={rgb, 255:red, 255; green, 127; blue, 14}] (73,28) -- (143,28) -- (143,8) -- (73,8) -- cycle ;
\draw    (143,48) -- (133,58) ;
\draw    (143,48) -- (143,58) ;

\draw (4,41) node [anchor=north west][inner sep=0.75pt]   [align=left] {\textcolor[rgb]{0.34,0.67,0.15}{Safe}};
\draw (93,31) node [anchor=north west][inner sep=0.75pt]  [color={rgb, 255:red, 0; green, 0; blue, 0 }  ,opacity=0.5 ] [align=left] {\textcolor[rgb]{0.8,0.03,0.12}{Unsafe}};
\draw (5,11) node [anchor=north west][inner sep=0.75pt]   [align=left] {\textcolor[rgb]{0.12,0.47,0.71}{Viable}};
\draw (81,9) node [anchor=north west][inner sep=0.75pt]   [align=left] {\textcolor[rgb]{1,0.5,0.05}{Unviable}};

\end{tikzpicture}
}
\caption{Safety vs.\ viability terminology illustrated by a cart on a sloped track. \emph{From an engineering perspective, unsafe states (red) are defined as the undesired states in which the car has crashed; their complement are the safe states (green). At the slope, insufficient torque prevents the cart from returning, so regardless of the control sequence applied, the car inevitably enters the unsafe set in finite time. These states are defined as unviable (orange). Their complement, i.e., states from which a control sequence exists that can keep the system safe, are viable (blue).
}}
    \label{fig:viablity_vs_safety}
\end{figure}
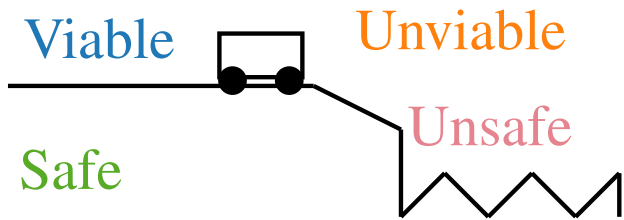

\end{document}